\newcolumntype{P}[1]{>{\centering\arraybackslash}p{#1}} 
\def\eqref#1{equation~\ref{#1}}
\def\1{\bm{1}}
\DeclareMathAlphabet{\mathsfit}{\encodingdefault}{\sfdefault}{m}{sl}
\SetMathAlphabet{\mathsfit}{bold}{\encodingdefault}{\sfdefault}{bx}{n}
\theoremstyle{plain}
\newtheorem{theorem}{Theorem}[section]
\newtheorem{proposition}[theorem]{Proposition}
\theoremstyle{definition}
\newtheorem{definition}[theorem]{Definition}
\theoremstyle{remark}
\title{Fair Bayesian Model-Based Clustering}
\author{%
  Jihu Lee\footnotemark[4] \\
  Department of Statistics\\
  Seoul National University\\
  \texttt{rieky0426@snu.ac.kr} \\
  \And
  Kunwoong Kim\footnotemark[4] \\
  Department of Statistics\\
  Seoul National University\\
  \texttt{kwkim.online@gmail.com} \\
  \And
  Yongdai Kim \\
  Department of Statistics\\
  Seoul National University\\
  \texttt{ydkim0903@gmail.com} \\
}
\begin{document}

\maketitle

\renewcommand{\thefootnote}{\fnsymbol{footnote}}
\footnotetext[4]{These authors contributed equally to this work.}

\begin{abstract}
    Fair clustering has become a socially significant task with the advancement of machine learning technologies and the growing demand for trustworthy AI. Group fairness ensures that the proportions of each sensitive group are similar in all clusters.
    Most existing group-fair clustering methods are based on the $K$-means clustering and thus require the distance between instances and the number of clusters to be given in advance.
    To resolve this limitation, we propose a fair Bayesian model-based clustering called Fair Bayesian Clustering (FBC).
    We develop a specially designed prior which puts its mass only on fair clusters, and implement an efficient MCMC algorithm.
    Advantages of FBC are that it can infer the number of clusters and can be applied to any data type as long as the likelihood is defined (e.g., categorical data).
    Experiments on real-world datasets show that FBC (i) reasonably infers the number of clusters, (ii) achieves a competitive utility–fairness trade-off compared to existing fair clustering methods, and (iii) performs well on categorical data. 
\end{abstract}


\section{Introduction}
\label{sec:introduction}

With the rapid development of machine learning-based technologies, algorithmic fairness has been considered as an important social consideration when making machine learning-based decisions.
Among diverse tasks combined with algorithmic fairness, fair clustering \citep{chierichetti2017fair} has received much interest, which ensures that clusters maintain demographic fairness across sensitive attributes such as gender or race.
However, most of existing fair clustering methods can be seen as modifications of $K$-means clustering algorithms, and thus they require (i) the number of clusters and (ii) the distance between two instances given a priori, limiting their adaptability to various kinds of real-world datasets, where the optimal number of clusters is unknown or it is hard to define a reasonable distance (e.g., categorical data).

This paper aims to develop a fair clustering algorithm based on a mixture model with an unknown number of clusters.
To infer fair clusters as well as the number of clusters, we resort to a Bayesian approach where the number of clusters is treated as a parameter to be inferred.
Specifically, we propose a fair Bayesian mixture model with an unknown number of clusters and develop an MCMC algorithm to approximate the posterior distribution of fair clusters as well as the number of clusters.

Among standard (fairness-agnostic) clustering approaches, various Bayesian models have been proposed to infer the number of clusters including mixture models with unknown components \cite{richardson1997bayesian,nobile2007bayesian,mccullagh2008many,miller2018mixture} and mixture of Dirichlet process models \cite{ferguson1973bayesian,antoniak1974mixtures,escobar1995bayesian,neal2000markov}.
For fair Bayesian mixture models, we first modify the standard Bayesian mixture model so that the prior concentrates on fair mixture models (called the fair prior) and so does the posterior distribution.
The basic idea of the fair prior is to put a fair constraint on the cluster assignments.
In general, however, computation of the posterior on a constraint parameter space would be computationally demanding and frequently infeasible. 

To address this challenge, we develop a fair prior based on the idea of matching instances from different sensitive groups.
An important advantage is that our proposed fair prior does not involve any explicit constraint on the parameters and thus posterior approximation by use of an MCMC algorithm can be done without much hamper.
The idea of matching instances has also been successfully implemented for fair supervised learning \cite{kim2025fairness} and non-Bayesian fair clustering \cite{kim2025fairclusteringalignment}.

Our main contributions are summarized as follows:
\begin{itemize}[noitemsep, topsep=0pt]
    \item[$\diamond$] We propose a definition of the fair mixture model. 
    \item[$\diamond$] We develop a novel MCMC algorithm to infer the fair mixture model with an unknown number of clusters, called Fair Bayesian Clustering (FBC).
    \item[$\diamond$] Experimentally, we show that FBC is competitive to existing non-Bayesian fair clustering methods when the number of clusters is given and infers a proper number of clusters well.
\end{itemize}

\section{Related works}
\label{sec:related_works}

\paragraph{Bayesian model-based clustering}

The mixture model is a model-based clustering approach, where each instance (or observation) is assumed to follow a mixture of parametric distributions independently \cite{ouyang2004gaussian,reynolds2009gaussian}.
Popular examples of parametric distributions are the Gaussian \cite{maugis2009variable, yang2012robust, zhang2021gaussian}, the student-t \cite{peel2000robust}, the skew-normal \cite{lin2007finite}, and the categorical \cite{pan2014bayesian,mclachlan2000finite}. 

Bayesian approaches have been popularly used for inference of model-based clustering since they can infer the number of clusters as well as cluster centers.
There are several well-defined Bayesian model-based clustering with unknown number of clusters including Mixture of Finite Mixtures (MFM) \cite{richardson1997bayesian,nobile2007bayesian,mccullagh2008many,miller2018mixture} and Dirichlet Process Mixture (DPM) \cite{ferguson1973bayesian,antoniak1974mixtures,escobar1995bayesian,neal2000markov}.
There also exist various MCMC algorithms for MFM and DPM, such as Reversible Jump Markov Chain Monte Carlo (RJMCMC) \cite{richardson1997bayesian} and Jain-Neal split-merge algorithm \cite{jain2004split}.

\paragraph{Fair clustering}

Given a pre-specified sensitive attribute (e.g., gender or race), the concept of fair clustering is first introduced by \cite{chierichetti2017fair}, with the aim of ensuring that the proportion of each sensitive group within each cluster matches the overall proportion in the entire dataset. 
This fairness criterion is commonly referred to as group fairness, and is also known as proportional fairness. 
Recently, several algorithms have been developed to maximize clustering utility under fairness constraints: 
\cite{chierichetti2017fair, backurs2019scalable} transform training data to fair representations to achieve fairness guarantee prior to clustering, 
\cite{kleindessner2019guarantees, ziko2021variational, li2020deep, zeng2023deep} incorporate fairness constraints or penalties directly during the clustering process, and 
\cite{bera2019fair, NEURIPS2020_a6d259bf} refine cluster assignments while holding pre-determined cluster centers fixed.  
These methods, however, require the number of clusters to be given in advance. 
This limitation, i.e., the lack of fair clustering algorithms that are adaptive to the unknown number of clusters, serves as the motivation for this study.
In addition, model-based clustering can be applied to data for which a meaningful distance is difficult to define (e.g., categorical data).

\section{Fair mixture model for clustering}\label{sec:fairbayesianmodel}

We consider group fairness, as it is one of the most widely studied notions of fairness \citep{chierichetti2017fair, kim2025fairclusteringalignment, backurs2019scalable, ziko2021variational, li2020deep, zeng2023deep, bera2019fair, NEURIPS2020_a6d259bf}.
For simplicity, we only consider a binary sensitive attribute, but provide a method of treating a multinary sensitive attribute in \cref{sec:multiple_s}.
Let $s \in \{ 0, 1 \}$ be a binary sensitive attribute known a priori.
We define two sets of instances (observed data) from the two sensitive groups as $\mathcal{D}^{(s)} = \{ X_i^{(s)} : X_i^{(s)} \in \mathcal{X} \subseteq \mathbb{R}^{d} \}_{i=1}^{n_{s}}$ for $s \in \{0,1\},$ where $\mathcal{X}$ is the support of $X,$ $n_{s}$ is the number of instances from the sensitive group $s,$ and $d$ is the number of features.
Let $\mathcal{D} := \mathcal{D}^{(0)} \cup \mathcal{D}^{(1)}$ be the set of the entire instances.
For a given $K \in \mathbb{N},$ we denote $\mathcal{S}_{K}$ as the $K$-dimensional simplex.

\subsection{Standard mixture model}

The standard finite mixture model \cite{ouyang2004gaussian,reynolds2009gaussian, maugis2009variable, yang2012robust, zhang2021gaussian} without considering fairness is given as
\begin{equation}
    X_{1}, \ldots, X_{n} \overset{\textup{i.i.d.}}{\sim} \sum_{k=1}^{K} \pi_{k} f (\cdot \vert \theta_k)
    \label{eq:standard_2}
\end{equation}
where $n := n_{0} + n_{1}$ and
\begin{equation}\label{eq:concat_X}
    X_{i} :=
    \begin{cases}
        X_{i}^{(0)} & \text{for } i \in \{1, \ldots, n_{0}\} \\
        X_{i-n_{0}}^{(1)} & \text{for } i \in \{n_{0}+1, \ldots, n_{0}+n_{1}\}.
    \end{cases}
\end{equation}
In view of clustering, $K$ is considered to be the number of clusters,
$\bm{\pi} := (\pi_k)_{k=1}^{K} \in \mathcal{S}_{K}$ are the proportions of instances belonging to the $k^{\textup{th}}$ cluster,
and $f(\cdot|\theta_k)$ is the density of instances in the $k^{\textup{th}}$ cluster.

An equivalent representation of the above model can be made by introducing latent variables $Z_{i} \in [K], i \in [n]$ as:
\begin{align}
    & Z_1,\dots,Z_n\overset{\textup{i.i.d.}}{\sim}\textup{Categorical}(\bm{\pi}) \label{eq:standard_with_z-1}
    \\
    & X_i \vert Z_i \sim f (\cdot \vert \theta_{Z_i}) \label{eq:standard_with_z-2}
\end{align}
Note that the latent variable $Z_{i}$ takes the role of the cluster assignment of $X_{i}$ for all $i \in [n].$
That is, when $Z_i=k,$ we say that $X_i$ belongs to the $k^{\textup{th}}$ cluster.

\subsection{Fair mixture model}\label{sec:fair_mixture}

To define a fair mixture model, we first generalize the formulation of the standard finite mixture model in \cref{eq:standard_with_z-1,eq:standard_with_z-2} by considering the dependent latent variables, which we call the \textit{generalized finite mixture model} in this paper.
Let $\bm{Z} := (Z_1,\dots,Z_n).$
Then, the generalized finite mixture model is defined as:
\begin{align}
    & \bm{Z}  \sim G(\cdot)\label{eq:gen_mixture_model_1}
    \\
    & X_i \vert Z_{i} \sim f(\cdot\vert\theta_{Z_i})\label{eq:gen_mixture_model_2}
\end{align}
where $G,$ i.e., the joint distribution of $\bm{Z},$ has the support on $[K]^{n}.$
When $G$ is equal to $\textup{Categorical}(\bm{\pi})^n$ (i.e., $Z_i, \forall i \in [n]$ independently follows $\textup{Categorical}(\bm{\pi})$), the generalized mixture model becomes the standard finite mixture model in \cref{eq:standard_with_z-1,eq:standard_with_z-2}.

We also define the (group) fairness level of $\bm{Z}$:
\begin{equation}\label{eq:def_delta}
    \Delta (\bm{Z}) := \frac{1}{2} \sum_{k=1}^{K} \left\vert \sum_{i=1}^{n_{0}} \mathbb{I}(Z_i^{(0)} = k) / n_{0} - \sum_{j=1}^{n_{1}} \mathbb{I}(Z_j^{(1)} = k) / n_{1} \right\vert
    \in [0, 1]
\end{equation}
where $Z_{i}^{(0)} = Z_{i}$ for $ i \in \{ 1, \ldots, n_0 \} $ and $ Z_{j}^{(1)} = Z_{j + n_0} $ for $ j \in \{ 1, \ldots, n_1 \},$ similar to \cref{eq:concat_X}.
We say that $\bm{Z}$ is fair with fairness level $\epsilon$ if $\bm{Z} \in \mathcal{Z}^{\textup{Fair}}_\epsilon,$
where
\begin{equation}
    \mathcal{Z}^{\textup{Fair}}_\epsilon := \left\{ \bm{Z}: \Delta (\bm{Z}) \le  \epsilon \right\}.
    \label{eq:fair_set}
\end{equation}
In turn, we define a given generalized mixture model to be fair with fairness level $\epsilon$ if the support of $G$ is confined on $\mathcal{Z}^{\textup{Fair}}_\epsilon.$

\subsection{Choice of \texorpdfstring{$G$}{G} for perfect fairness: \texorpdfstring{$\Delta(\bm{Z})=0$}{Delta(Z) = 0}}\label{sec:feas_G}

The art of Bayesian analysis of the fair mixture model is to parameterize $G$ such that posterior inference becomes computationally feasible.
For example, we could consider $\textup{Categorical}(\bm{\pi})^{n}$ (the distribution of independent $Z_i$) restricted to $\mathcal{Z}^{\textup{Fair}}_0$ as a candidate for $G.$
While developing an MCMC algorithm for this distribution would not be impossible, but it would be quite challenging.

In this section, we propose a novel distribution for $G$ in the perfectly fair (i.e., $\Delta (\bm{Z}) = 0$) mixture model with which a practically feasible MCMC algorithm for posterior inference can be implemented.
To explain the main idea of our proposed $G$, we first consider the simplest case of balanced data where $n_{0} = n_{1}$ in \cref{sec:equal_size}.
We then discuss how to handle the case of $n_{0} \neq n_{1}$ in \cref{sec:unequal_size}.

\subsubsection{Case of \texorpdfstring{$n_{0} = n_{1}$}{n0 = n1}}\label{sec:equal_size}
Let $\bar{n} := n_{0} = n_{1}.$
A key observation is that any fair $\bm{Z}$ corresponds to a matching map between $[\bar{n}]$ and $[\bar{n}]$ (an one-to-one map or equivalently a permutation map on $[\bar{n}]$), which is stated in \cref{prop:matching1} with its proof in \cref{sec:theory-omitproof}.
\cref{fig:z} illustrates the relationship between a fair $\bm{Z}$ and a matching map $\mathbf{T} : [\bar{n}] \to [\bar{n}].$
 
\begin{proposition}
    \label{prop:matching1}
    $\bm{Z}\in \mathcal{Z}^{\textup{Fair}}_0$ $\iff$
    There exists a matching map $\mathbf{T}$ such that $Z_{j}^{(1)}=Z^{(0)}_{\mathbf{T}(j)}, \forall j \in [\bar{n}].$ 
\end{proposition}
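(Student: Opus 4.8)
The plan is to prove both directions of the equivalence by first reducing the condition $\bm{Z} \in \mathcal{Z}^{\textup{Fair}}_0$ to a statement about equal cluster counts across the two sensitive groups. Writing $c_k^{(s)} := \sum_{i=1}^{\bar{n}} \mathbb{I}(Z_i^{(s)} = k)$ for the number of group-$s$ instances assigned to cluster $k$, I would observe that when $n_0 = n_1 = \bar{n}$ the definition in \cref{eq:def_delta} collapses to $\Delta(\bm{Z}) = \frac{1}{2\bar{n}} \sum_{k=1}^{K} | c_k^{(0)} - c_k^{(1)} |$. Since every summand is nonnegative and $\mathcal{Z}^{\textup{Fair}}_0 = \{\bm{Z} : \Delta(\bm{Z}) = 0\}$, we get $\bm{Z} \in \mathcal{Z}^{\textup{Fair}}_0$ if and only if $c_k^{(0)} = c_k^{(1)}$ for every $k \in [K]$. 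This reformulation is the key observation that makes both implications transparent.

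For the direction $(\Leftarrow)$, I would assume a matching map (permutation) $\mathbf{T}$ exists with $Z_j^{(1)} = Z_{\mathbf{T}(j)}^{(0)}$ for all $j \in [\bar{n}]$. Then for each fixed $k$, substituting and reindexing via the bijection $\mathbf{T}$ gives $c_k^{(1)} = \sum_{j=1}^{\bar{n}} \mathbb{I}(Z_{\mathbf{T}(j)}^{(0)} = k) = \sum_{i=1}^{\bar{n}} \mathbb{I}(Z_i^{(0)} = k) = c_k^{(0)}$, since $\mathbf{T}$ ranges over all of $[\bar{n}]$ exactly once. By the reformulation above, $\Delta(\bm{Z}) = 0$, so $\bm{Z} \in \mathcal{Z}^{\textup{Fair}}_0$.

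For the converse $(\Rightarrow)$, I would construct $\mathbf{T}$ explicitly from the equal-count condition. Let $A_k^{(s)} := \{ i \in [\bar{n}] : Z_i^{(s)} = k \}$; for each fixed $s$ the family $\{A_k^{(s)}\}_{k=1}^{K}$ partitions $[\bar{n}]$. The equal-count condition yields $|A_k^{(0)}| = |A_k^{(1)}|$ for every $k$, so on each cluster $k$ I may choose an arbitrary bijection $\mathbf{T}_k : A_k^{(1)} \to A_k^{(0)}$. Gluing these along $k$ defines a single map $\mathbf{T}$ on all of $[\bar{n}]$, and by construction $Z_{\mathbf{T}(j)}^{(0)} = k = Z_j^{(1)}$ whenever $j \in A_k^{(1)}$.

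The argument is essentially a counting and bijection exercise, so I expect no deep obstacle. The one point requiring care is verifying that the cluster-wise bijections $\mathbf{T}_k$ glue into a genuine global permutation of $[\bar{n}]$: this holds precisely because the two families $\{A_k^{(0)}\}_k$ and $\{A_k^{(1)}\}_k$ are partitions of the common index set $[\bar{n}]$ with matching block sizes, which is exactly where the balanced-data assumption $n_0 = n_1$ enters and makes $\mathbf{T}$ well-defined, injective, and surjective.
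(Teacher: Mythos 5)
Your proof is correct and takes essentially the same route as the paper: the paper proves \cref{prop:matching1} as the $\beta = 1$ special case of \cref{prop:matching2}, whose argument is exactly your reduction of $\Delta(\bm{Z})=0$ to equal cluster counts, followed by gluing cluster-wise bijections (onto maps with preimage size $\beta$ in the general case) into a global matching map. The only cosmetic difference is that you argue directly in the balanced setting rather than invoking the unbalanced proposition.
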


We utilize the above proposition to define a fair distribution $G.$
The main idea is that \textit{$G$ assigns two matched data to a same cluster once a matching map is given}.
To be more specific, the proposed distribution  is parametrized by $\bm{\pi}$ and $\mathbf{T},$
which is defined by $Z_1^{(0)},\ldots, Z_{\bar{n}}^{(0)} \stackrel{\textup{i.i.d}} \sim \textup{Categorical}(\bm{\pi})$ and $Z_{j}^{(1)}=Z_{\mathbf{T}(j)}^{(0)}.$
It is easy to see that $G$ is perfectly fair.
We write $G(\cdot|\bm{\pi},\mathbf{T})$ for such a distribution.

Denote $\mathcal{Z}^{\textup{Fair}}(\mathbf{T})$ as the support of $G(\cdot|\bm{\pi},\mathbf{T}).$
Note that $\mathcal{Z}^{\textup{Fair}}(\mathbf{T}) \subset \mathcal{Z}^{\textup{Fair}}_0,$ so one might worry that the support of $G(\cdot|\bm{\pi},\mathbf{T})$ is too small.
Proposition \ref{prop:matching1}, however, implies that $\cup_{\mathbf{T}\in \mathcal{T}}\mathcal{Z}^{\textup{Fair}}(\mathbf{T})=\mathcal{Z}^{\textup{Fair}}_0,$ where $\mathcal{T}$ is the set of all matching maps, and thus we can put a prior mass on $\mathcal{Z}^{\textup{Fair}}_0$ by putting a prior on $\mathbf{T}$ (as well as $\bm{\pi}$) accordingly.

\begin{figure}[h]
    \vskip -0.1in
    \centering
    \includegraphics[width=0.42\linewidth]{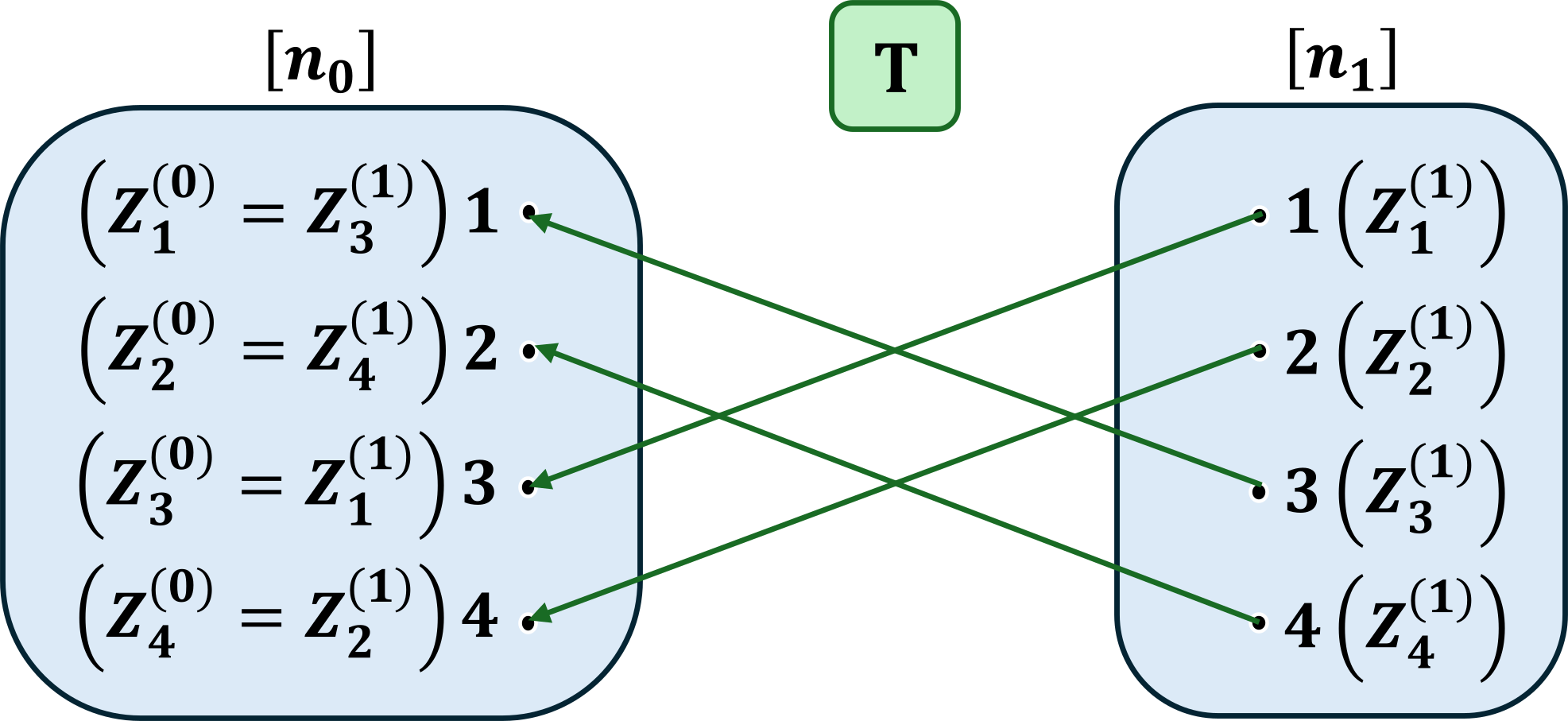}
    \caption{An example illustration of our fair assignment when $\bar{n} = n_{0} = n_{1} = 4,$ where $\mathbf{T}$ is a given matching map such that 
    $ (\mathbf{T}(1), \mathbf{T}(2), \mathbf{T}(3), \mathbf{T}(4)) = (3, 4, 1, 2). $
    }
    \label{fig:z}
    \vskip -0.1in
\end{figure}

A crucial benefit of the proposed $G$ is that $\bm{\pi}$ and $\mathbf{T}$ are not intertwined, hence they can be selected independently.
As a result, we can find a fair mixture model without any additional constraint for the parameters.
It is noteworthy that, conditional on $\mathbf{T},$ the fair mixture model can be written similarly to the standard finite mixture model \cref{eq:gen_mixture_model_1,eq:gen_mixture_model_2} as follows:
\begin{align}
    &(Z^{(0)}_1,\ldots,Z^{(0)}_{\bar{n}})\sim \textup{Categorical}(\bm{\pi})^{\bar{n}} \label{eq:fair_with_z-1}\\
    &X_{\mathbf{T}(j)}^{(0)}, X_{j}^{(1)}\vert Z_{\mathbf{T}(j)}^{(0)}\stackrel{\textup{ind}}{\sim} f(\cdot\vert\theta_{Z_{\mathbf{T}(j)}^{(0)}}) \label{eq:fair_with_z-2}
\end{align}

\subsubsection{Case of \texorpdfstring{$n_{0} \neq n_{1}$}{n0 != n1}}\label{sec:unequal_size}
Without loss of generality, we assume that $n_0<n_1$ such that $n_1=\beta n_0 +r$ for nonnegative integers $\beta$ and $r < n_0.$
We first consider the case of $r=0$ because we can modify $G$ for the balanced case easily.
When $r \ne 0,$ the situation is complicated since there is no one-to-one correspondence between fairness and matching map, and thus we propose a heuristic modification.

\paragraph{Case of \texorpdfstring{$r=0$ :}{r=0}}
A function $\mathbf{T}$ from $[n_1]$ to $[n_0]$ is called a \textit{matching map} if (i) it is onto and (ii) $|\mathbf{T}^{-1}(i)|=\beta$ for all $i\in [n_0].$ 
Let $\mathcal{T}$ be the set of all matching maps.
Then, we have \cref{prop:matching2}, which is similar to \cref{prop:matching1} for the balanced case. 
See \cref{sec:theory-omitproof} for its proof.

\begin{proposition}
\label{prop:matching2}
    $\bm{Z}\in \mathcal{Z}^{\textup{Fair}}_0$ $\iff$ 
    There exists a matching map $\mathbf{T}$ such that $Z_{j}^{(1)}=Z^{(0)}_{\mathbf{T}(j)}, \forall j \in [n_1].$
\end{proposition}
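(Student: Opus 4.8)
The plan is to reduce the perfect-fairness condition $\Delta(\bm{Z})=0$ to a system of per-cluster counting equalities, and then establish each implication separately: a direct counting argument for ($\Leftarrow$) and an explicit cluster-by-cluster construction for ($\Rightarrow$). Throughout, write $a_k := \sum_{i=1}^{n_0} \mathbb{I}(Z_i^{(0)}=k)$ and $b_k := \sum_{j=1}^{n_1}\mathbb{I}(Z_j^{(1)}=k)$ for the numbers of group-$0$ and group-$1$ instances assigned to cluster $k$. Since every summand in the definition of $\Delta$ is nonnegative, $\Delta(\bm{Z})=0$ holds if and only if $a_k/n_0 = b_k/n_1$ for all $k$; substituting $n_1=\beta n_0$ (the $r=0$ assumption), this is equivalent to $b_k = \beta a_k$ for every $k\in[K]$.

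For the ($\Leftarrow$) direction, suppose a matching map $\mathbf{T}$ satisfies $Z_j^{(1)}=Z^{(0)}_{\mathbf{T}(j)}$ for all $j$. I would regroup the sum defining $b_k$ according to the value $i=\mathbf{T}(j)$:
\begin{equation*}
    b_k = \sum_{j=1}^{n_1}\mathbb{I}\bigl(Z^{(0)}_{\mathbf{T}(j)}=k\bigr) = \sum_{i=1}^{n_0}|\mathbf{T}^{-1}(i)|\,\mathbb{I}\bigl(Z^{(0)}_i=k\bigr) = \beta\sum_{i=1}^{n_0}\mathbb{I}\bigl(Z^{(0)}_i=k\bigr) = \beta a_k,
\end{equation*}
where the third equality invokes the defining property $|\mathbf{T}^{-1}(i)|=\beta$ of a matching map. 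Thus $b_k=\beta a_k$ for all $k$, which is exactly $\Delta(\bm{Z})=0$.

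For the ($\Rightarrow$) direction, assume $\bm{Z}\in\mathcal{Z}^{\textup{Fair}}_0$, so that $b_k=\beta a_k$ for all $k$. The idea is to construct $\mathbf{T}$ cluster by cluster. For each $k$, set $I_k^{(0)}:=\{i:Z_i^{(0)}=k\}$ and $I_k^{(1)}:=\{j:Z_j^{(1)}=k\}$, so $|I_k^{(1)}|=b_k=\beta a_k=\beta|I_k^{(0)}|$. Because $|I_k^{(1)}|$ is exactly $\beta$ times $|I_k^{(0)}|$, I can partition $I_k^{(1)}$ into $|I_k^{(0)}|$ blocks of size $\beta$ and define a surjection $\mathbf{T}_k:I_k^{(1)}\to I_k^{(0)}$ whose every fiber has exactly $\beta$ elements. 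Gluing these maps together, I set $\mathbf{T}(j):=\mathbf{T}_k(j)$ for $j\in I_k^{(1)}$. By construction $j$ and $\mathbf{T}(j)$ lie in the same cluster, so $Z^{(0)}_{\mathbf{T}(j)}=k=Z_j^{(1)}$; moreover $\mathbf{T}$ is onto $[n_0]$ (each $\mathbf{T}_k$ is onto $I_k^{(0)}$, and the sets $\{I_k^{(0)}\}_k$ partition $[n_0]$), and $|\mathbf{T}^{-1}(i)|=\beta$ for every $i$. Hence $\mathbf{T}$ is a matching map of the required form.

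The main obstacle is reconciling the \emph{global} fiber-size constraint $|\mathbf{T}^{-1}(i)|=\beta$ with the requirement that each $j$ be matched within its own cluster; these two demands could conflict if the clusters were unbalanced across groups. What resolves the tension is precisely the per-cluster balance $b_k=\beta a_k$ furnished by perfect fairness, which is exactly the arithmetic condition that makes each local uniform-fiber surjection $\mathbf{T}_k$ constructible. Once the problem is decomposed cluster-wise, the remaining checks are routine and mirror the balanced case of \cref{prop:matching1}, with ``one-to-one'' replaced by ``$\beta$-to-one.''
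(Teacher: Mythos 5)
Your proof is correct and takes essentially the same approach as the paper's: both reduce $\Delta(\bm{Z})=0$ to the per-cluster counting identity $|C_k^{(1)}| = \beta\,|C_k^{(0)}|$, prove ($\Leftarrow$) by counting group-$1$ points over the fibers of $\mathbf{T}$, and prove ($\Rightarrow$) by building a $\beta$-to-one surjection within each cluster and gluing these maps together. Your version merely spells out a few steps (the fiber-regrouping sum, the verification that the glued map is onto with uniform fiber size) that the paper leaves implicit.
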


For given $\bm{\pi} \in \mathcal{S}_{K}$ and $\mathbf{T} \in \mathcal{T},$ 
we define a fair distribution $G(\cdot|\bm{\pi}, \mathbf{T})$ as
$Z_1^{(0)}, \ldots, Z_{n_{0}}^{(0)} \stackrel{\textup{i.i.d}} \sim \textup{Categorical}(\bm{\pi})$
and $Z_j^{(1)} = Z_{\mathbf{T}(j)}^{(0)}.$
Similar to the balanced case, we have $\cup_{\mathbf{T}\in \mathcal{T}}\mathcal{Z}^{\textup{Fair}}(\mathbf{T})=\mathcal{Z}^{\textup{Fair}}_0.$ 

\paragraph{Case of \texorpdfstring{$r>0$ :}{r>0}}
It can be shown (see \cref{prop:matching3} in \cref{sec:theory-omitproof} for the proof) that for a given fair $\bm{Z} \in \mathcal{Z}^{\textup{Fair}}_0,$ there exists a function $\mathbf{T}$ from $[n_1]$ to $[n_0]$ such that it is onto, $|\mathbf{T}^{-1}(i)|$ is either $\beta$ or $\beta+1$ and $|R_{\mathbf{T}}|=r,$ where $R_{\mathbf{T}}=\{ i: |\mathbf{T}^{-1}(i)|=\beta+1\}.$ 
Let $\mathcal{T}$ be the set of all such matching maps (functions satisfying these conditions).

A difficulty arises since the converse is not always true.
That is, there exists $\bm{Z}$ that satisfies $Z_{j}^{(1)}=Z_{\mathbf{T}(j)}^{(0)}$ for some $\mathbf{T}\in \mathcal{T}$ but is not fair.
In turn, a sufficient condition of $\bm{Z}$ with $Z_{j}^{(1)}=Z_{\mathbf{T}(j)}^{(0)}$ for some $\mathbf{T}\in \mathcal{T}$ to be fair is that $|C_{k}^{(0)}|/n_0= |C_{k}^{(0)}\cap R_{\mathbf{T}}|/r,$ where $C_{k}^{(0)}=\{i: Z_i^{(0)}=k\}$ for $k\in [K]$ (see \cref{prop:matching4} in \cref{sec:theory-omitproof} for the proof).
That is, fairness of $\bm{Z}$ depends on both $\mathbf{T}$ and $\bm{Z}^{(0)},$ which would make the computation of the posterior inference expensive.

To resolve this difficulty, we propose a heuristic modification of the definition of `fairness of $\bm{Z}$'.
Let $R$ be a subset of $[n_0]$ with $|R|=r,$ and let $\mathcal{T}_{R}$ be a subset of $\mathcal{T}$ such that $R_{\mathbf{T}}=R.$ 
Then, we say that $\bm{Z}$ is fair if there exists $\mathbf{T}\in \mathcal{T}_R$ such that $Z_{j}^{(1)}=Z_{\mathbf{T}(j)}^{(0)}.$ Note that a fair $\bm{Z}$ may not belong to $\mathcal{Z}^{\textup{Fair}}_0$ but the violation of fairness would be small when $|C_{k}^{(0)}|/n_0 \approx |C_{k}^{(0)}\cap R|/r.$
Let $\mathbb{P}_{n_0}$ and $\mathbb{P}_R$ be the empirical distributions of $\{X_i^{(0)}, i\in [n_0] \}$ and $\{X_i^{(0)}, i\in R\},$ respectively.
If $\mathbb{P}_{n_0}(\cdot)=\mathbb{P}_R(\cdot),$ we have $|C_{k}^{(0)}|/n_0 = |C_{k}^{(0)}\cap R|/r,$ and thus any fair $\bm{Z}$ belongs to $\mathcal{Z}^{\textup{Fair}}_0.$
This observation suggests us to choose $R$ such that $\{X_i^{(0)}, i\in R\}$ represents the original data $\{X_i^{(0)}, i\in [n_0]\}$ well. 
In this paper, we consider two candidates for $R:$ 
(i) a random subset of $[n_0]$ and 
(ii) the index of samples closest to the cluster centers obtained by a certain clustering algorithm to $\mathcal{D}^{(0)}$ with $K=r.$
Finally, we can define $G(\cdot|\bm{\pi},\mathbf{T})$ for $\bm{\pi}\in \mathcal{S}_{K}$ and $\mathbf{T}\in \mathcal{T}_R$ similarly to that for the balanced case.

Our numerical studies in \cref{sec:exps-abl} confirm that the two proposed choices of $R$ work quiet well.

\subsection{Choice of \texorpdfstring{$G$}{G} for non-perfect fairness: \texorpdfstring{$\Delta(\bm{Z})>0$}{Delta(Z) > 0}}\label{sec:relax_G}

Once $\bm{\pi}\in \mathcal{S}_{K}$ and $\mathbf{T}\in \mathcal{T}_R$ are given, we can modify $G(\cdot|\bm{\pi},\mathbf{T})$ to have a distribution whose support is included in $ \mathcal{Z}_{\varepsilon}^{\textup{Fair}}.$
The main idea of the proposed modification is to select $m$ many samples from $\mathcal{D}^{(1)}$ and to assign independent clustering labels to them instead of matching them to the corresponding samples in $\mathcal{D}^{(0)}.$
Let $E$ be a subset of $[n_1]$ with $\vert E \vert = m.$
We consider a latent variable $\mathbf{T}_{0} : [n_{1}] \to [n_{0}]$ that is an arbitrary function from $[n_{1}]$ to $[n_{0}].$
Then, we let $Z_j^{(1)}=Z_{\mathbf{T}(j)}^{(0)}$ for $j\in [n_{1}] \setminus E$ and $Z_j^{(1)}=Z_{\mathbf{T}_{0}(j)}^{(0)}$ for $j\in E.$ 
In other words, $\mathcal{D}^{(1)}$ is masked by $E:$
(i) the masked data in $E$ are assigned by $\mathbf{T}_{0}$,
(ii) while the unmasked data in $[n_{1}] \setminus E$ are still assigned by $\mathbf{T}.$
It can be shown that the support of the distribution $G(\cdot|\bm{\pi},\mathbf{T},\mathbf{T}_{0},E)$ belongs to 
$ \mathcal{Z}_{\varepsilon}^{\textup{Fair}} $ with $\varepsilon=m/n_1,$ provided that $r=0.$
See \cref{prop:relaxed_general} in \cref{sec:theory-omitproof} for the theoretical proof.

\section{Fair Bayesian modeling with unknown number of clusters}
\label{sec:proposed_perfect}

In this section, we propose a fair Bayesian mixture model for an unknown number of clusters. 

\subsection{Generative model}\label{sec:method_abfc}

Based on the mechanism in \cref{eq:fair_with_z-1,eq:fair_with_z-2}, we propose a fair mixture model as:
\begin{align}
    & \bm{\pi} \vert K \sim\textup{Dirichlet}_{K} (\gamma,\dots,\gamma), \gamma > 0
    \label{eq:fairmmr_1}
    \\
    & Z_i^{(0)} \vert \bm{\pi} \overset{\textup{i.i.d.}}{\sim}
    \textup{Categorical} (\cdot \vert \bm{\pi}), \quad\forall i \in [n_{0}]
    \label{eq:fairmmr_2}
    \\
    &
    Z_{j}^{(1)} =
    \begin{cases}\label{eq:fairmmr_3}
        Z_{\mathbf{T}(j)}^{(0)}, & \forall j \in [n_{1}]\setminus E
        \\
        Z_{\mathbf{T}_{0}(j)}^{(0)}, & \forall j\in E
    \end{cases}
    \\
    & X_i^{(0)} \vert Z_{i}^{(0)} \sim f(\cdot \vert \theta_{Z_i^{(0)}}),
    X_j^{(1)} \vert Z_{j}^{(1)} \sim f(\cdot \vert \theta_{Z_{j}^{(1)}})
    \label{eq:fairmmr_4}
\end{align}

In this model, $K,\bm{\theta}=(\theta_1,\theta_2,\ldots),E,\mathbf{T}$ and $\mathbf{T}_0$ are the parameters to be inferred.

\subsection{Prior}\label{sec:original_priors}

As a prior, we assume that each parameter is independent: $K\sim p_K(\cdot), E \sim p_E(\cdot),\mathbf{T}\sim p_\mathbf{T}(\cdot), \mathbf{T}_{0} \sim p_{\mathbf{T}_{0}}(\cdot),$ and $\theta_{1}, \theta_2\ldots, \overset{\textup{ind}}{\sim} H.$

\paragraph{Prior for \texorpdfstring{$K$}{K}}
For $K,$ we consider $K \sim p_K (\cdot)$, where $p_K$ is a probability mass function on $\{1,2,\dots\}.$
In this study, we use $\textup{Geometric}(\kappa), \kappa\in (0,1)$ for $p_K.$

\paragraph{Prior for \texorpdfstring{$\bm{\theta}$}{theta}}
Usually, we choose a conjugate distribution of $f(\cdot|\theta)$ for $H.$ 
For example, when $f(\cdot|\theta)$ is the density of the Gaussian distribution,
and $\theta$ consists of the mean vector and diagonal covariance matrix as $\theta= (\bm{\mu},\bm{\lambda}^{-1})\in\mathbb{R}^{d}\times\mathbb{R}^{d},$ where $\bm{\mu} = (\mu_{j})_{j=1}^{d}$ and $\bm{\lambda} = (\lambda_{j})_{j=1}^{d}.$
We let $(\mu_j,\lambda_j), j\in [d]$ be independent and follow
$\lambda_{j} \sim \textup{Gamma}(a, b)$ for some $a, b$ and $\mu_{j} | \lambda_{j} \sim \mathcal{N}(0,\lambda_{j}^{-1}), j \in [d]$ for $H.$

\paragraph{Prior for \texorpdfstring{$\mathbf{T}$}{T}}
Motivated by \citep{NIPS2012_4c27cea8},
we construct a  prior of $\mathbf{T}$ with its support $\mathcal{T}_{R}$ based on the energy defined in \cref{def:energy_T}.
The energy of a random matching map $\mathbf{T}$ is defined as below, which measures the similarity between two matched data.
\begin{definition}[Energy of a matching map]\label{def:energy_T}
    Let $D: \mathcal{X} \times \mathcal{X} \rightarrow \mathbb{R}_{+}$ be a given distance and $\Pi_{n} := \{ \mathbf{T} : [n_{1}] \rightarrow [n_{0}] \}.$
    For a given matching map $\mathbf{T} \in \Pi_{n},$ the energy of $\mathbf{T}$ is defined by
    $
    \mathbf{e}(\mathbf{T}) = \mathbf{e}(\mathbf{T}; \tau) := \exp \left( - \sum_{j=1}^{n_{1}} D \left( X_{\mathbf{T}(j)}^{(0)}, X_{j}^{(1)} \right) / n_{1} \tau \right),
    $
    where $\tau > 0$ is a pre-specified temperature constant.
\end{definition}
See \cref{sec:exps-appen_impl} for the choice of $D$.
For the prior, we let $p_\mathbf{T}(\mathbf{T}) \propto \mathbf{e}(\mathbf{T}) \mathbb{I}(\mathbf{T}\in \mathcal{T}_R).$

\paragraph{Prior for \texorpdfstring{$\mathbf{T}_{0}$}{T0}}
We use the uniform distribution on $\Pi_{n}$.

\paragraph{Prior for \texorpdfstring{$E$}{E}}
We use the uniform distribution on $[n_1:m],$
where $[n_1:m]$ is the collection of all subsets of $[n_1]$ whose cardinality is $m.$

\subsection{Equivalent representation}
\label{sec:equiv_rep}

As is done in \cite{miller2018mixture}, we use the following equivalent representation of the proposed fair Bayesian mixture model. 

\paragraph{Generative model}
For a given partition $\mathcal{C}$ of $[n_{0}],$ an equivalent generative model to the proposed fair mixture model in \cref{eq:fairmmr_1,eq:fairmmr_2,eq:fairmmr_3,eq:fairmmr_4} is: 
\begin{align}
    & \phi_c\overset{\textup{i.i.d.}}{\sim}H, c\in\mathcal{C} \label{eq:eq_rep_1}
    \\
    & X_i^{(0)} \overset{\textup{ind}}{\sim} f(\cdot \vert \phi_c) \quad i \in c \label{eq:eq_rep_2}
    \\
    & X_j^{(1)} \overset{\textup{ind}}{\sim} \label{eq:eq_rep_3}
    \begin{cases}
        f(\cdot \vert \phi_c) & \forall j \in [n_{1}] \setminus E \textup{ s.t. } \mathbf{T}(j) \in c
        \\
        f(\cdot \vert \phi_c) & \forall j \in E \textup{ s.t. } \mathbf{T}_{0}(j) \in c.
    \end{cases}
\end{align}

\paragraph{Prior for \texorpdfstring{$\mathcal{C}$}{C}}
The prior for $\mathcal{C}$ in this equivalent representation is 
$ p_{\mathcal{C}}(\mathcal{C} \vert \mathbf{T},\mathbf{T}_0,E) = p_{\mathcal{C}}(\mathcal{C}) = V_{n_0}(t)\prod_{c\in\mathcal{C}}\gamma^{(|c|)},$ where $t=|\mathcal{C}|,$
$ V_{n_0}(t)=\sum_{k=1}^\infty\frac{k_{(t)}}{(\gamma k)^{({n_0})}}p_K(k),$
$ (\gamma k)^{({n_0})} = (\gamma k + {n_0} - 1)! / (\gamma k - 1)!$
and $k_{(t)} = k! / (k-t)!.$
See \cite{miller2018mixture} for the derivation. 

\paragraph{Priors for \texorpdfstring{$\mathbf{T}, \mathbf{T}_{0}$ and $E$}{T, T0 and E}}
The prior of $(\mathbf{T},\mathbf{T}_{0},E)$ remains the same as the prior introduced in \cref{sec:original_priors}.

\section{Inference algorithm: FBC}\label{sec:inference}
We develop an MCMC algorithm for the proposed fair Bayesian mixture model in \cref{sec:equiv_rep}. 
Here, we denote $\Phi$ as the mixture parameters to be sampled (i.e., $\Phi = \mathcal{C}$ when $H$ is conjugate, or $\Phi = (\mathcal{C}, \bm{\phi})$ when $H$ is non-conjugate, where $\bm{\phi} := \left(\phi_c:c\in\mathcal{C}\right)$).

The posterior sampling of $(\mathbf{T},\mathbf{T}_{0},E,\Phi)\sim p(\mathbf{T},\mathbf{T}_{0},E,\Phi\vert\mathcal{D})$ is done by a Gibbs sampler: 
(i) sampling $(\mathbf{T},\mathbf{T}_{0},E)\sim p(\mathbf{T},\mathbf{T}_{0},E|\Phi,\mathcal{D})$,
and
(ii) sampling $\Phi\sim p(\Phi|\mathbf{T},\mathbf{T}_{0},E,\mathcal{D}).$
We name the proposed MCMC inference algorithm as \textbf{Fair Bayesian Clustering (FBC)}. 
In the subsequent two subsections, we explain how to sample $(\mathbf{T},\mathbf{T}_{0},E)$ and $\Phi$ from their conditional posteriors, respectively, by using a Metropolis-Hasting (MH) algorithm.
We also discuss the extension of FBC for handling a multinary sensitive attribute in \cref{sec:multiple_s}, with experiments on a real dataset.

\subsection{STEP 1 \texorpdfstring{$\triangleright$ Sampling $(\mathbf{T},\mathbf{T}_{0},E)\sim p(\mathbf{T},\mathbf{T}_{0},E|\Phi,\mathcal{D})$}{: Sampling T, T0 and E}}\label{sec:alg-step1}

\paragraph{Proposal}
For the proposal distribution of $(\mathbf{T}',\mathbf{T}_{0}',E')$ from $(\mathbf{T},\mathbf{T}_{0},E),$
we first assume that $\mathbf{T}\rightarrow \mathbf{T}', \mathbf{T}_{0} \rightarrow \mathbf{T}_{0}'$
and $E\rightarrow E'$ are independent.

For the proposal of $\mathbf{T}\rightarrow \mathbf{T}' ,$ we randomly select two indices $i_1$ and $i_2$
from $[n_{1}],$ and define as:
\begin{equation}\label{eq:proposal_T_1}
    \mathbf{T}'(j):=
    \begin{cases}
        \mathbf{T}(j) & \text{for } j \notin [n_{1}] \setminus \{ i_{1}, i_{2} \}
        \\
        \mathbf{T}(i_{2}) & \text{for } j = i_{1}
        \\
        \mathbf{T}(i_{1}) & \text{for } j = i_{2}.
    \end{cases}
\end{equation}
We swap only two indices to guarantee $\mathbf{T}'\in \mathcal{T}_R.$

For $\mathbf{T}_{0}\rightarrow \mathbf{T}_{0}' ,$ we randomly select an index $ j' \in [n_{1}], $ then set $\mathbf{T}_{0}'(j') = i$ where $i \sim \textup{Unif}([n_{0}])$ and $\mathbf{T}_{0}'(j) = \mathbf{T}_{0}(j)$ for $j \ne j'.$ For $E \rightarrow E',$ we randomly swap two indices, one from $E$ and the other from $[n_{1}] \setminus E.$

\paragraph{Acceptance / Rejection}

As the randomness in  the proposal of $\mathbf{T}', \mathbf{T}_{0}'$ and $E'$
does not depend on $\mathbf{T}, \mathbf{T}_{0}$ and $E,$ 
the proposal density ratio $q((\mathbf{T}',\mathbf{T}_{0}',E') \to (\mathbf{T},\mathbf{T}_{0},E)) / q((\mathbf{T},\mathbf{T}_{0},E) \to (\mathbf{T}',\mathbf{T}_{0}',E'))$ is equal to 1.
Hence, the acceptance probability of a proposal $(\mathbf{T}',\mathbf{T}_{0}',E')$ is given as
\begin{equation}
    \begin{aligned}
        \alpha(\mathbf{T}',\mathbf{T}_{0}',E')=\min \left\{1, \frac{e(\mathbf{T}')\mathcal{L}(\mathcal{D};\Phi,\mathbf{T}',\mathbf{T}_{0}',E')}{e(\mathbf{T})\mathcal{L}(\mathcal{D};\Phi,\mathbf{T},\mathbf{T}_{0},E)}\right\}.
    \end{aligned}
    \label{eq:acceptance_ratio}
\end{equation}
See \cref{sec:accept_prob-appen} for the calculation details of the acceptance probability.

We repeat the MH sampling of $(\mathbf{T}', \mathbf{T}_{0}', E')$ multiple times before sampling $\Phi,$ which helps accelerate convergence.
In our experiments, we perform this repetition 10 times.
This additional computation is minimal, even for large datasets, since the acceptance probability requires calculating the likelihood only for instances whose assigned clusters change.
The maximum number of such instances with changed clusters is 4 (2 for $\mathbf{T}',$ 1 for $\mathbf{T}_0'$ and 1 for $E'$).

\subsection{STEP 2 \texorpdfstring{$\triangleright$ Sampling $\Phi\sim p(\Phi \vert \mathbf{T},\mathbf{T}_{0}, E, \mathcal{D})$}{: Sampling mixture parameters}}\label{sec:alg-step2}
Sampling from $p(\Phi\vert \mathbf{T},\mathbf{T}_{0},E,\mathcal{D})$ can be done similar to the sampling 
algorithm for the standard mixture model when $(\mathbf{T},\mathbf{T}_{0},E)$ is given.
Specifically, we mimic the procedure of \cite{miller2018mixture}, which utilizes DPM inference algorithms from \cite{neal2000markov,maceachern1998estimating}.
See \cref{sec:details_STEP2s} for details of this sampling step.

\section{Experiments}\label{sec:exps}

This section presents results of numerical experiments.  
Through analysis on multiple benchmark datasets, we show that
(i) FBC controls the fairness level well and it is compared favorably with existing baselines in terms of the trade-off between clustering utility and fairness level;
(ii) FBC infers the number of clusters $K$ reasonably well;
(iii) FBC is easily applicable to categorical data.

\subsection{Settings}\label{sec:exps-settings}

\paragraph{Datasets and performance measures}

We analyze a synthetically generated 2D toy dataset and three real datasets: \textsc{Diabetes} \citep{pima_diabetes}, \textsc{Adult} \citep{misc_adult_2}, and \textsc{Bank} \citep{MORO201422}.
The toy dataset is sampled from a 6-component Gaussian mixtures.
All features in the datasets are continuous and so we use the fair Gaussian mixture model.
We scale all features of data to have zero mean and unit variance.
See \cref{sec:exps-appen_data} for details about the datasets.

The following are the measures we use for evaluating a given clustering result.
For clustering utility, we consider the \texttt{Cost} (i.e., the average distance to the center of the assigned cluster from each data point) for evaluating the clustering utility, which is defined as:
$ \texttt{Cost} := \sum_{k=1}^{K} \sum_{X_{i} \in C_{k}} \Vert X_{i} - \hat{\mu}_{k} \Vert^{2} / n, $
where $ C_{k} := C_{k}^{(0)} \cup C_{k}^{(1)} $ is the set of data assigned to the $k^{\textup{th}}$ cluster.
Here, $\hat{\mu}_{k} := \sum_{X_{i} \in C_{k}} X_{i} / \vert C_{k} \vert$ denotes the center of the $k^{\textup{th}}$ cluster.

For fairness level, we consider two measures:
(i) the fairness level $\Delta(\bm{Z})$ defined in \cref{sec:fair_mixture} and 
(ii) the balance (\texttt{Bal}) defined as 
$\texttt{Bal} := \min_{k \in [K]} \texttt{Bal}_{k},$ where $\texttt{Bal}_{k} := \min \left\{ \vert C_{k}^{(0)} \vert / \vert C_{k}^{(1)} \vert, \vert C_{k}^{(1)} \vert / \vert C_{k}^{(0)} \vert \right\}$
which is popularly considered in recent fair clustering literature \citep{backurs2019scalable, ziko2021variational, li2020deep, zeng2023deep, bera2019fair, esmaeili2021fair}.
We abbreviate $\Delta(\bm{Z})$ by $\Delta$ in this section.
See \cref{sec:multiple_s} for the definitions of $\Delta$ and \texttt{Bal} for a multinary sensitive attribute.

\paragraph{Set-up of experiments}
For a fairness-agnostic method, we consider the Mixtures of Finite Mixtures (MFM) algorithm proposed by \cite{miller2018mixture}.
For fairness-aware methods, we consider three algorithms: two existing non-Bayesian approaches and a Bayesian approach.
Two non-Bayesian approaches are SFC \cite{backurs2019scalable} and VFC \cite{ziko2021variational}, the first of which is a fairlet-based approach and the other is an in-processing approach by adding a fairness regularizer.
For the Bayesian approach, we use the FBC while the matching map between the two sensitive groups is fixed in advance by a fairlet \cite{chierichetti2017fair,backurs2019scalable}, which we call Fair MFM (i.e., MFM on a pre-specified fairlet space).
For FBC, we run 1200 iterations, discard the first 1000 as burn-in, and collect 200 post-burn-in samples.
The main results reported in this section are based on a sample randomly selected from these 200 post–burn-in samples.
Whenever $r > 0$, we set $R$ (defined in \cref{sec:unequal_size}) as a random subset of $[n_0]$ of size $r$.
Additional experimental details, including prior parameter selection and implementations of the baseline methods, are provided in \cref{sec:exps-appen_impl}.

\subsection{Simulation on a toy dataset}\label{sec:toy}

\cref{fig:toy} compares how  MFM and FBC behave differently on the toy dataset.
See \cref{sec:exps-appen_data} for the generative model of the toy data.
Note that MFM is perfectly unfair while FBC is perfectly fair.
Moreover, we can see that enforcing fairness significantly affects not only cluster locations but also the number of clusters.

\begin{figure*}[h]
    \vskip -0.15in
    \centering
    \subfloat{\includegraphics[width=0.4\linewidth]{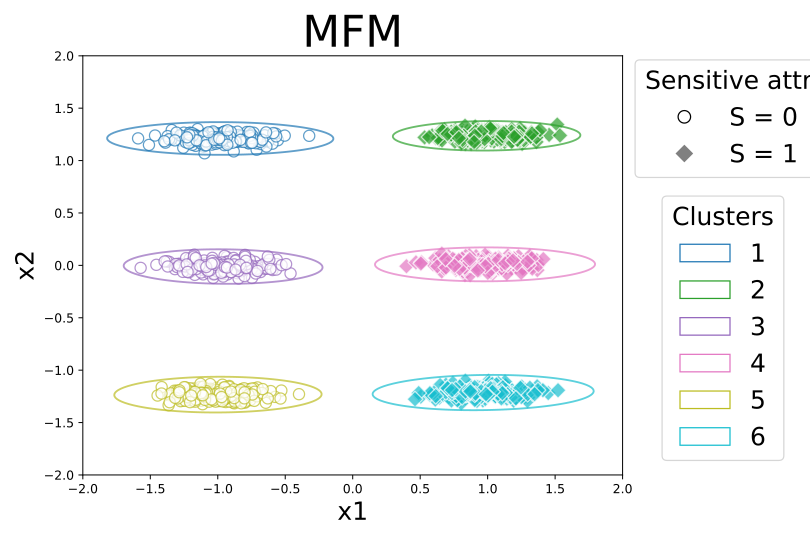}}
    \subfloat{\includegraphics[width=0.4\linewidth]{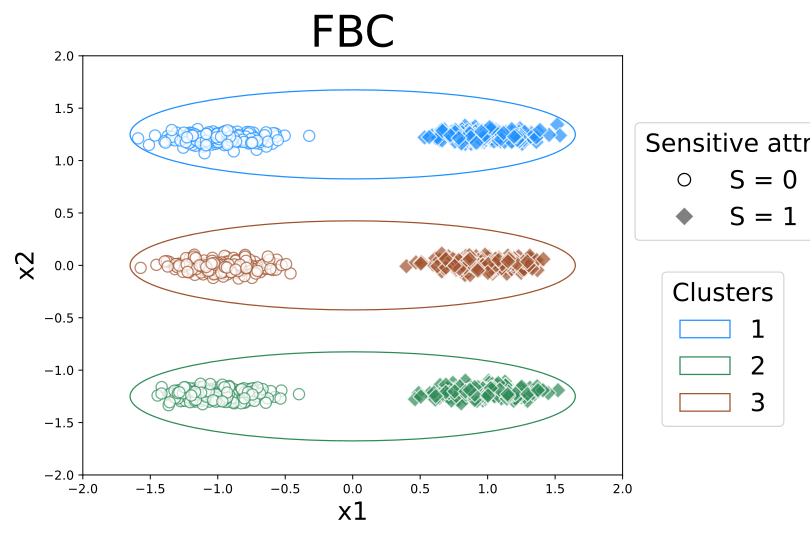}}
    \caption{
    Visualization of the clustering results on a toy dataset (left - MFM with $K = 6$, right - FBC with $K = 3$).
    The shape represents the sensitive attribute (i.e., {\Large $\circ$} for $S = 0$ and \textcolor{gray}{$\blacklozenge$} for $S = 1$).
    The colored ellipses correspond to clusters made.
    }
    \vskip -0.15in
    \label{fig:toy}
\end{figure*}

\subsection{Fair clustering performance}\label{sec:exp-compare}

\paragraph{Comparison of FBC to existing fair clustering algorithms}

We investigate whether FBC provides reasonable clustering results, by assessing the trade-off between utility (\texttt{Cost}) and fairness ($\texttt{Bal}$ and $\Delta$).
Specifically, we compare FBC (with $\Delta \approx 0$ by setting $m = 0$) with the three baselines (SFC, VFC, and Fair MFM).
For SFC and VFC, we set $K$ as the inferred one by FBC for a fair comparison, while Fair MFM infers $K$ by itself.
\cref{table:main-compare} shows that FBC achieves the lowest $\Delta$ without hampering \texttt{Cost} much, when compared to SFC, VFC, and Fair MFM.

\begin{table}[h]
    \vskip -0.1in
    \footnotesize
    \caption{
    Comparison of \texttt{Cost}, $\Delta,$ and $\texttt{Bal}$ on the three real datasets.
    The $K$s for SFC and VFC are fixed at the inferred $K$ by FBC.
    \textbf{Bold}-faced results indicate the lowest $\Delta$s.
    }
    \label{table:main-compare}
    \centering
    \vskip 0.1in
    \begin{tabular}{c||P{0.2cm}P{0.8cm}P{0.7cm}P{0.65cm}|P{0.2cm}P{0.8cm}P{0.7cm}P{0.65cm}|P{0.2cm}P{0.85cm}P{0.7cm}P{0.65cm}}
        \toprule
        Dataset & \multicolumn{4}{c|}{\textsc{Diabetes}} & \multicolumn{4}{c|}{\textsc{Adult}} & \multicolumn{4}{c}{\textsc{Bank}}
        \\
        \midrule
        \multirow{2}{*}{Method} & $K$ & \texttt{Cost} ($\downarrow$) & $\Delta$ ($\downarrow$)& $\texttt{Bal}$ ($\uparrow$) & $K$ & \texttt{Cost} ($\downarrow$) & $\Delta$ ($\downarrow$) & $\texttt{Bal}$ ($\uparrow$) & $K$ & \texttt{Cost} ($\downarrow$) & $\Delta$ ($\downarrow$) & $\texttt{Bal}$ ($\uparrow$)
        \\
        \midrule
        SFC & 4 & 6.789 & 0.026 & 0.881 & 3 & 6.774 & 0.011 & 0.438 & 7 & 5.720 & 0.050 & {0.650}
        \\
        VFC & 4 & 6.761 & 0.039 & 0.858 & 3 & 5.763 & 0.013 & 0.423 & 7 & 4.643 & 0.052 & 0.524
        \\
        Fair MFM & 4 & {6.337} & 0.026 & {0.922} & 2 & 4.970 & {0.014} & {0.440} & 2 & 6.081 & 0.121 & 0.533
        \\
        FBC $\checkmark$ & 4 & 6.795 & \textbf{0.012} & {0.910} & 3 & {4.990} & \textbf{0.006} & 0.429 & 7 & 5.947 & \textbf{0.020} & {0.500}
        \\
        \bottomrule
    \end{tabular}
\end{table}

\paragraph{Fairness level control of FBC}

We also confirm that the fairness level can be controlled well by controlling $m$ (the size of $E$).
\cref{fig:control} in \cref{sec:exps-appen} presents the trade-off between $m$ and the fairness levels $\Delta$ and \texttt{Bal}, showing that a smaller $m$ usually results in a fairer clustering.

\paragraph{Analysis with a multinary sensitive attribute}
See \cref{sec:exp-compare_appen} for the analysis on \textsc{Bank} dataset with three sensitive groups, which shows that FBC still works well for a multinary sensitive attribute.

\subsection{Inference of the number of clusters \texorpdfstring{$K$}{K}}\label{sec:exps-inferK}

\cref{fig:t_posterior_plot} draws the posterior distributions of $K$ for the three datasets.
The posterior distributions are well-concentrated around the posterior modes.
See \cref{fig:K_nll} in \cref{sec:appen-exp-K_nll} for an evidence that the posterior modes are good choices for the number of clusters.

\begin{figure}[h]
    \vskip -0.1in
    \centering
    \includegraphics[width=.28\linewidth]{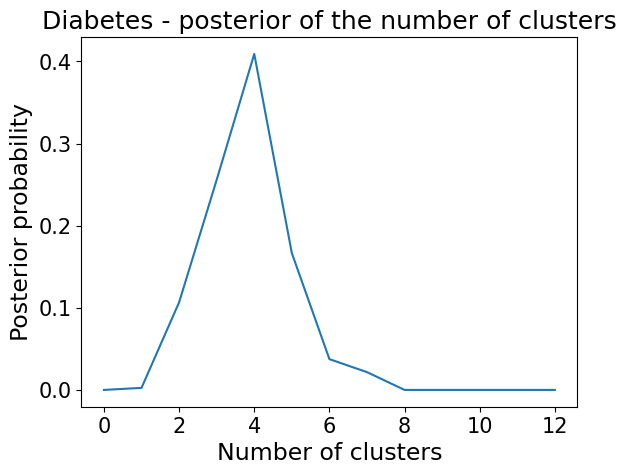}
    \includegraphics[width=.27\linewidth]{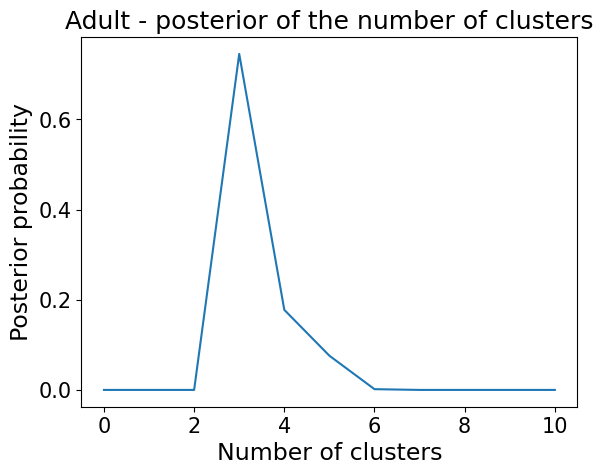}
    \includegraphics[width=.27\linewidth]{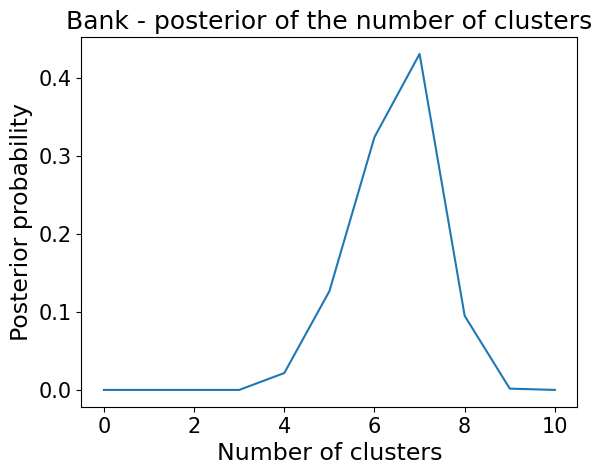}
    \caption{
    Posteriors of $K$ on (left) \textsc{Diabetes}, (center) \textsc{Adult}, and (right) \textsc{Bank} datasets.
    }
    \label{fig:t_posterior_plot}
    \vskip -0.1in
\end{figure}

\subsection{Application of FBC to categorical data}\label{sec:exp-categorical}

To explore how FBC works with categorical data, we synthetically build a categorized \textsc{Bank} dataset, where all six continuous features are binarized by their median values. 
We then apply FBC to this dataset and observe that it provides a significantly fairer clustering compared to the fairness-agnostic method (MFM). 
See \cref{sec:appen-exp-categorical} for details on the dataset construction and how we apply FBC to this dataset, where the results are presented in \cref{table:mixed}.

\subsection{Ablation studies}\label{sec:exps-abl}

\paragraph{Temperature \texorpdfstring{$\tau$}{tau}}
We analyze the impact of the temperature constant $\tau$ in the prior of $\mathbf{T}$ defined in \cref{def:energy_T}.
To do so, we set $E = \emptyset$ and compare the utility (\texttt{Cost}) and fairness levels ($\texttt{Bal}$ and $\Delta$) for various values of $\tau.$ 
\cref{table:appen-tau} in \cref{sec:appen-exps-abl} shows that FBC is not sensitive to the choice of $\tau.$

\paragraph{Choice of \texorpdfstring{$R$ when $r>0$}{R when r>0}}

We investigate the validity of the two proposed heuristic approaches of selecting $R$ in \cref{sec:unequal_size}.
See \cref{table:appen-heuristic} in \cref{sec:appen-exps-abl} for the results, which suggest that the two approaches (random vs. cluster centers) are not much different. 
Since the use of cluster centers requires additional computation, we recommend using a random subset of $[n_{0}]$ for $R.$

\paragraph{Prior parameter \texorpdfstring{$\kappa$ in $p_{K}$}{kappa in p_K}}

We use $\kappa = 0.1$ in $\textup{Geometric}(\kappa)$ for our experiments.
To investigate sensitivity, we vary $\kappa \in \{0.1, 0.3, 0.5, 0.7, 0.9\}$ and observe that FBC is not much sensitive to $\kappa$ (see \cref{fig:kappa_abl} in \cref{sec:appen-exps-abl}).

\paragraph{Convergence of MCMC}

We assess the convergence of the MCMC algorithm by monitoring (i) the inferred $K$ and (ii) the negative log-likelihood ($\texttt{NLL}$) on training data, over sampling iterations.
As is done by \cite{miller2018mixture}, the results in \cref{fig:conv_abl,fig:conv_abl_nll} of \cref{sec:appen-exps-abl} show that the MCMC algorithm converges well and finds good clusters.

\section{Concluding remarks}\label{sec:conc}

In this paper, we proposed a fair Bayesian mixture model for fair clustering with unknown number of clusters and developed an efficient MCMC algorithm called FBC.
A key advantage of FBC is its ability to infer the number of clusters.
In addition, the proposed MCMC algorithm can be further accelerated by adding the Jain-Neal split-merge sampler of \cite{jain2004split} for sampling $\Phi.$

A problem not pursued in this work is the assignment of new data to the learned clusters. 
In standard mixture models, a new datum can be assigned to a cluster based on the posterior distribution.
In contrast, FBC requires matched instances to be assigned to the same cluster, but the inferred matching map is only available for the training data.  
To address this, one may approximate the learned matching map using a parametric model and apply it to new test data.  
We leave this approach for future work.

From a societal perspective, by mitigating unfair bias between privileged and underprivileged groups in clustering, our approach may contribute to more trustworthy decision-making in real-world applications.

\clearpage
\bibliographystyle{unsrt}
\bibliography{references.bib}

\begin{thebibliography}{10}

\bibitem{chierichetti2017fair}
Flavio Chierichetti, Ravi Kumar, Silvio Lattanzi, and Sergei Vassilvitskii.
\newblock Fair clustering through fairlets.
\newblock {\em Advances in neural information processing systems}, 30, 2017.

\bibitem{richardson1997bayesian}
Sylvia Richardson and Peter~J Green.
\newblock On bayesian analysis of mixtures with an unknown number of components (with discussion).
\newblock {\em Journal of the Royal Statistical Society Series B: Statistical Methodology}, 59(4):731--792, 1997.

\bibitem{nobile2007bayesian}
Agostino Nobile and Alastair~T Fearnside.
\newblock Bayesian finite mixtures with an unknown number of components: The allocation sampler.
\newblock {\em Statistics and Computing}, 17:147--162, 2007.

\bibitem{mccullagh2008many}
Peter McCullagh and Jie Yang.
\newblock How many clusters?
\newblock {\em Bayesian Analysis}, 3(1):101--120, 2008.

\bibitem{miller2018mixture}
Jeffrey~W Miller and Matthew~T Harrison.
\newblock Mixture models with a prior on the number of components.
\newblock {\em Journal of the American Statistical Association}, 113(521):340--356, 2018.

\bibitem{ferguson1973bayesian}
Thomas~S Ferguson.
\newblock A bayesian analysis of some nonparametric problems.
\newblock {\em The annals of statistics}, pages 209--230, 1973.

\bibitem{antoniak1974mixtures}
Charles~E Antoniak.
\newblock Mixtures of dirichlet processes with applications to bayesian nonparametric problems.
\newblock {\em The annals of statistics}, pages 1152--1174, 1974.

\bibitem{escobar1995bayesian}
Michael~D Escobar and Mike West.
\newblock Bayesian density estimation and inference using mixtures.
\newblock {\em Journal of the american statistical association}, 90(430):577--588, 1995.

\bibitem{neal2000markov}
Radford~M Neal.
\newblock Markov chain sampling methods for dirichlet process mixture models.
\newblock {\em Journal of computational and graphical statistics}, 9(2):249--265, 2000.

\bibitem{kim2025fairness}
Kunwoong Kim, Insung Kong, Jongjin Lee, Minwoo Chae, Sangchul Park, and Yongdai Kim.
\newblock Fairness through matching.
\newblock {\em Transactions on Machine Learning Research}, 2025.

\bibitem{kim2025fairclusteringalignment}
Kunwoong Kim, Jihu Lee, Sangchul Park, and Yongdai Kim.
\newblock Fair clustering via alignment, 2025.

\bibitem{ouyang2004gaussian}
Ming Ouyang, William~J Welsh, and Panos Georgopoulos.
\newblock Gaussian mixture clustering and imputation of microarray data.
\newblock {\em Bioinformatics}, 20(6):917--923, 2004.

\bibitem{reynolds2009gaussian}
Douglas~A Reynolds et~al.
\newblock Gaussian mixture models.
\newblock {\em Encyclopedia of biometrics}, 741(659-663), 2009.

\bibitem{maugis2009variable}
Cathy Maugis, Gilles Celeux, and Marie-Laure Martin-Magniette.
\newblock Variable selection for clustering with gaussian mixture models.
\newblock {\em Biometrics}, 65(3):701--709, 2009.

\bibitem{yang2012robust}
Miin-Shen Yang, Chien-Yo Lai, and Chih-Ying Lin.
\newblock A robust em clustering algorithm for gaussian mixture models.
\newblock {\em Pattern Recognition}, 45(11):3950--3961, 2012.

\bibitem{zhang2021gaussian}
Yi~Zhang, Miaomiao Li, Siwei Wang, Sisi Dai, Lei Luo, En~Zhu, Huiying Xu, Xinzhong Zhu, Chaoyun Yao, and Haoran Zhou.
\newblock Gaussian mixture model clustering with incomplete data.
\newblock {\em ACM Transactions on Multimedia Computing, Communications, and Applications (TOMM)}, 17(1s):1--14, 2021.

\bibitem{peel2000robust}
David Peel and Geoffrey~J McLachlan.
\newblock Robust mixture modelling using the t distribution.
\newblock {\em Statistics and computing}, 10:339--348, 2000.

\bibitem{lin2007finite}
Tsung~I Lin, Jack~C Lee, and Shu~Y Yen.
\newblock Finite mixture modelling using the skew normal distribution.
\newblock {\em Statistica Sinica}, pages 909--927, 2007.

\bibitem{pan2014bayesian}
Jia-Chiun Pan and Guan-Hua Huang.
\newblock Bayesian inferences of latent class models with an unknown number of classes.
\newblock {\em Psychometrika}, 79(4):621--646, 2014.

\bibitem{mclachlan2000finite}
Geoffrey~J McLachlan and David Peel.
\newblock {\em Finite mixture models}.
\newblock John Wiley \& Sons, 2000.

\bibitem{jain2004split}
Sonia Jain and Radford~M Neal.
\newblock A split-merge markov chain monte carlo procedure for the dirichlet process mixture model.
\newblock {\em Journal of computational and Graphical Statistics}, 13(1):158--182, 2004.

\bibitem{backurs2019scalable}
Arturs Backurs, Piotr Indyk, Krzysztof Onak, Baruch Schieber, Ali Vakilian, and Tal Wagner.
\newblock Scalable fair clustering.
\newblock In {\em International Conference on Machine Learning}, pages 405--413. PMLR, 2019.

\bibitem{kleindessner2019guarantees}
Matth{\"a}us Kleindessner, Samira Samadi, Pranjal Awasthi, and Jamie Morgenstern.
\newblock Guarantees for spectral clustering with fairness constraints.
\newblock In {\em International conference on machine learning}, pages 3458--3467. PMLR, 2019.

\bibitem{ziko2021variational}
Imtiaz~Masud Ziko, Jing Yuan, Eric Granger, and Ismail~Ben Ayed.
\newblock Variational fair clustering.
\newblock In {\em Proceedings of the AAAI Conference on Artificial Intelligence}, volume~35, pages 11202--11209, 2021.

\bibitem{li2020deep}
Peizhao Li, Han Zhao, and Hongfu Liu.
\newblock Deep fair clustering for visual learning.
\newblock In {\em Proceedings of the IEEE/CVF Conference on Computer Vision and Pattern Recognition}, pages 9070--9079, 2020.

\bibitem{zeng2023deep}
Pengxin Zeng, Yunfan Li, Peng Hu, Dezhong Peng, Jiancheng Lv, and Xi~Peng.
\newblock Deep fair clustering via maximizing and minimizing mutual information: Theory, algorithm and metric.
\newblock In {\em Proceedings of the IEEE/CVF Conference on Computer Vision and Pattern Recognition}, pages 23986--23995, 2023.

\bibitem{bera2019fair}
Suman Bera, Deeparnab Chakrabarty, Nicolas Flores, and Maryam Negahbani.
\newblock Fair algorithms for clustering.
\newblock {\em Advances in Neural Information Processing Systems}, 32, 2019.

\bibitem{NEURIPS2020_a6d259bf}
Elfarouk Harb and Ho~Shan Lam.
\newblock Kfc: A scalable approximation algorithm for k−center fair clustering.
\newblock In H.~Larochelle, M.~Ranzato, R.~Hadsell, M.F. Balcan, and H.~Lin, editors, {\em Advances in Neural Information Processing Systems}, volume~33, pages 14509--14519. Curran Associates, Inc., 2020.

\bibitem{NIPS2012_4c27cea8}
Maksims Volkovs and Richard Zemel.
\newblock Efficient sampling for bipartite matching problems.
\newblock In F.~Pereira, C.J. Burges, L.~Bottou, and K.Q. Weinberger, editors, {\em Advances in Neural Information Processing Systems}, volume~25. Curran Associates, Inc., 2012.

\bibitem{maceachern1998estimating}
Steven~N MacEachern and Peter M{\"u}ller.
\newblock Estimating mixture of dirichlet process models.
\newblock {\em Journal of Computational and Graphical Statistics}, 7(2):223--238, 1998.

\bibitem{pima_diabetes}
Jack Smith, J.~Everhart, W.~Dickson, W.~Knowler, and Richard Johannes.
\newblock Using the adap learning algorithm to forcast the onset of diabetes mellitus.
\newblock {\em Proceedings - Annual Symposium on Computer Applications in Medical Care}, 10, 11 1988.

\bibitem{misc_adult_2}
Barry Becker and Ronny Kohavi.
\newblock {Adult}.
\newblock UCI Machine Learning Repository, 1996.
\newblock {DOI}: https://doi.org/10.24432/C5XW20.

\bibitem{MORO201422}
Sérgio Moro, Paulo Cortez, and Paulo Rita.
\newblock A data-driven approach to predict the success of bank telemarketing.
\newblock {\em Decision Support Systems}, 62:22--31, 2014.

\bibitem{esmaeili2021fair}
Seyed Esmaeili, Brian Brubach, Aravind Srinivasan, and John Dickerson.
\newblock Fair clustering under a bounded cost.
\newblock {\em Advances in Neural Information Processing Systems}, 34:14345--14357, 2021.

\bibitem{hamming1950error}
Richard~W Hamming.
\newblock Error detecting and error correcting codes.
\newblock {\em The Bell system technical journal}, 29(2):147--160, 1950.

\bibitem{huang1998extensions}
Zhexue Huang.
\newblock Extensions to the k-means algorithm for clustering large data sets with categorical values.
\newblock {\em Data mining and knowledge discovery}, 2(3):283--304, 1998.

\bibitem{zhang2006clustering}
Peng Zhang, Xiaogang Wang, and Peter X-K Song.
\newblock Clustering categorical data based on distance vectors.
\newblock {\em Journal of the American Statistical Association}, 101(473):355--367, 2006.

\end{thebibliography}

\clearpage
\appendix

\section{Theoretical results for \texorpdfstring{\cref{sec:fairbayesianmodel}}{Section 3}}\label{sec:theory-omitproof}


\textbf{\cref{prop:matching1}}
    Assume that $n_{0} = n_{1} = \bar{n}.$
    Then, we have:
    $\bm{Z}\in \mathcal{Z}^{\textup{Fair}}_0$ $\iff$
    There exists a matching map $\mathbf{T}$ such that $Z_{j}^{(1)}=Z^{(0)}_{\mathbf{T}(j)}, \forall j \in [\bar{n}].$ 

\begin{proof}[Proof of \cref{prop:matching1}]
    See the proof of \cref{prop:matching2}, as this proposition is a special case of \cref{prop:matching2}, with $\beta = 1.$
\end{proof}

\textbf{\cref{prop:matching2}}
    Assume that $n_{1} = \beta n_{0}$ for some positive integer $\beta.$
    Then, we have:
    $\bm{Z}\in \mathcal{Z}^{\textup{Fair}}_0$ $\iff$ 
    There exists a matching map $\mathbf{T}$ such that $Z_{j}^{(1)}=Z^{(0)}_{\mathbf{T}(j)}, \forall j \in [n_1].$

\begin{proof}[Proof of \cref{prop:matching2}]
    ($\implies$)
    Recall that $C_{k}^{(0)} = \{ i : Z_{i}^{(0)} = k \}$ and $C_{k}^{(1)} = \{ j : Z_{j}^{(1)} = k \}.$
    Note that $\bm{Z} \in \mathcal{Z}^{\textup{Fair}}_0$ implies $\beta \vert C_{k}^{(0)} \vert = \vert C_{k}^{(1)} \vert$ for all $k \in [K].$
    Hence, for all $k \in [K],$ there exists an onto map $\mathbf{T}_{k}$ from $C_{k}^{(1)}$ to $C_{k}^{(0)}$ such that $\vert \mathbf{T}_{k}^{-1}(i) \vert = \beta$ for all $i \in C_{k}^{(0)}.$
    Letting $\mathbf{T}(j) := \sum_{k=1}^{K} \mathbf{T}_{k}(j) \mathbb{I}(j \in C_{k}^{(1)}), j \in [n_{1}]$ concludes the proof.

    ($\impliedby$)
    Suppose that there exists a matching map $\mathbf{T}$ such that $Z_{j}^{(1)}=Z^{(0)}_{\mathbf{T}(j)}$ for all $j \in [n_{1}].$
    Then, we have $ \beta \vert C_{k}^{(0)} \vert = \vert C_{k}^{(1)} \vert$ for all $k \in [K].$
    Hence, $\sum_{i=1}^{n_{0}} \mathbb{I}(Z_{i}^{(0)} = k) / n_{0} 
    = \vert C_{k}^{(0)} \vert / n_{0} 
    = \beta \vert C_{k}^{(0)} \vert / n_{1} 
    = \vert C_{k}^{(1)} \vert / n_{1} 
    = \sum_{j=1}^{n_{1}} \mathbb{I}(Z_{j}^{(1)} = k) / n_{1},$ which implies $\bm{Z} \in \mathcal{Z}^{\textup{Fair}}_0.$
\end{proof}

\cref{prop:matching3,prop:matching4} below support the claims in \cref{sec:unequal_size}.
For enhanced readability, we first recall some notations/assumptions:
\begin{itemize}
    \item $n_{1} = \beta n_{0} + r.$
    
    \item $\mathbf{T} : [n_1] \to [n_0]$ is a function such that 
    it is onto, $|\mathbf{T}^{-1}(i)|$ is either $\beta$ or $\beta+1$
    and $|R_{\mathbf{T}}|=r,$ where
    $R_{\mathbf{T}}=\{ i: |\mathbf{T}^{-1}(i)|=\beta+1\}.$ 
    Let $\mathcal{T}$ be the set of all such matching maps (functions satisfying these conditions).

    \item Let $R_{\mathbf{T}}=\{ i: |\mathbf{T}^{-1}(i)|=\beta+1\}$ for a given $\mathbf{T}: [n_{1}] \to [n_{0}].$

    \item For given $Z_{i}^{(0)}, i \in [n_{0}]$ and $Z_{j}^{(1)}, j \in [n_{1}],$
    we define
    $C_{k}^{(0)}=\{\,i:Z_{i}^{(0)}=k\}$
    and
    $C_{k}^{(1)}=\{\,j:Z_{j}^{(1)}=k\}$
    for $k \in [K].$
\end{itemize}

\begin{proposition}\label{prop:matching3}
    Assume that $r > 0.$
    For a given fair $\bm{Z} \in \mathcal{Z}^{\textup{Fair}}_0,$ there exists a function $\mathbf{T}\in \mathcal{T}.$
\end{proposition}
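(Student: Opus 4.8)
The plan is to mimic the forward direction ($\implies$) of \cref{prop:matching2}, replacing the exact fiber size $\beta$ by the two admissible sizes $\beta$ and $\beta+1$, and then to check that the number of oversized fibers comes out to exactly $r$. Since $\mathcal{T}$ is nonempty regardless of $\bm{Z}$, the substantive claim is the existence of a map $\mathbf{T}\in\mathcal{T}$ that additionally respects the cluster labels, i.e.\ $Z_j^{(1)}=Z_{\mathbf{T}(j)}^{(0)}$ for all $j\in[n_1]$. First I would translate the fairness hypothesis into a counting statement: writing $a_k:=|C_k^{(0)}|$ and $b_k:=|C_k^{(1)}|$, the condition $\Delta(\bm{Z})=0$ is equivalent to $a_k/n_0=b_k/n_1$ for every $k\in[K]$, so that $b_k=(n_1/n_0)a_k=\beta a_k+(r/n_0)a_k$.

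Next I would introduce $\rho_k:=b_k-\beta a_k=(r/n_0)a_k$ and establish the three arithmetic facts that make the construction feasible: (i) $\rho_k$ is a nonnegative integer, since $b_k$ and $\beta a_k$ are both integers and $b_k\ge\beta a_k$; (ii) $\rho_k\le a_k$, in fact $\rho_k<a_k$ whenever $a_k>0$, because $r<n_0$ forces $(r/n_0)a_k<a_k$; and (iii) $\sum_k\rho_k=(r/n_0)\sum_k a_k=r$.

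Given these, I would build $\mathbf{T}$ cluster by cluster. Within a cluster $k$ with $a_k>0$ I distribute the $b_k$ indices of $C_k^{(1)}$ among the $a_k$ indices of $C_k^{(0)}$ so that exactly $\rho_k$ of the group-$0$ indices receive $\beta+1$ preimages and the remaining $a_k-\rho_k$ receive $\beta$; the totals match, $\rho_k(\beta+1)+(a_k-\rho_k)\beta=\beta a_k+\rho_k=b_k$, so such an onto local map $\mathbf{T}_k:C_k^{(1)}\to C_k^{(0)}$ exists precisely because $0\le\rho_k\le a_k$. Pasting these together via $\mathbf{T}(j):=\mathbf{T}_k(j)$ for $j\in C_k^{(1)}$ yields a well-defined map on $[n_1]$, since the sets $C_k^{(1)}$ partition $[n_1]$.

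Finally I would verify $\mathbf{T}\in\mathcal{T}$ and the label-matching property. Ontoness is inherited from the local maps because $\cup_k C_k^{(0)}=[n_0]$; every fiber has size $\beta$ or $\beta+1$ by construction; and $|R_\mathbf{T}|=\sum_k\rho_k=r$ by fact (iii). The identity $Z_j^{(1)}=Z_{\mathbf{T}(j)}^{(0)}$ holds because $j\in C_k^{(1)}$ forces $\mathbf{T}(j)\in C_k^{(0)}$, so both sides equal $k$. I expect the only delicate point to be the integrality together with the bound $\rho_k\le a_k$ in step (ii)—everything downstream is a routine balls-into-bins assignment—so that is where I would concentrate, noting that integrality is exactly what perfect fairness buys and the strict bound is exactly what the hypothesis $r<n_0$ buys.
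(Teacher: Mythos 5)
Your proposal is correct, and it follows the same skeleton as the paper's proof (a cluster-by-cluster balls-into-bins construction with fibers of size $\beta$ or $\beta+1$, pasted together via $\mathbf{T}(j):=\mathbf{T}_k(j)$ for $j\in C_k^{(1)}$), but it differs in the one step that actually matters: how the number of oversized fibers per cluster is determined. The paper anchors everything at the smallest cluster $k^*$, setting $\alpha=r\,|C_{k^*}^{(0)}|/n_0$ and then $\alpha_l=a_l\alpha$ with $a_l=|C_l^{(0)}|/|C_{k^*}^{(0)}|$ \emph{claimed} to be a positive integer; this divisibility need not hold. For instance, with $n_0=10$, $n_1=15$ (so $\beta=1$, $r=5$) and group-$0$ cluster sizes $4$ and $6$, the assignment is perfectly fair ($b_k=\tfrac{3}{2}a_k$ gives $6$ and $9$), yet $|C_l^{(0)}|/|C_{k^*}^{(0)}|=6/4\notin\mathbb{N}$, so the paper's construction of $\mathbf{T}_l$ is not defined there. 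Your route avoids this entirely: you compute the per-cluster surplus directly as $\rho_k=b_k-\beta a_k=(r/n_0)a_k$, whose integrality is immediate because $b_k$ and $\beta a_k$ are integers, and whose sum telescopes to $r$; your example above is handled with $\rho_1=2$, $\rho_2=3$. So your argument is not just a rephrasing — it is more elementary and in fact repairs a genuine gap in the paper's proof. One small point worth making explicit in a final write-up: ontoness of the local maps needs every fiber nonempty, i.e.\ $\beta\ge 1$, which holds because $n_0<n_1$ forces $\beta\ge 1$ (the paper is equally silent on this).
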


\begin{proof}[Proof of \cref{prop:matching3}]
    Since $\bm{Z} \in \mathcal{Z}^{\textup{Fair}},$ for all $k \in [K]$ we have
    $|C_{k}^{(1)}| = \frac{n_{1}}{n_{0}}\,|C_{k}^{(0)}|,$ which is an integer.
    Let 
    $k^{*}=\arg\min_{k} |C_{k}^{(0)}|$
    and
    $\alpha=\frac{r\,|C_{k^{*}}^{(0)}|}{n_{0}}\in\mathbb{N}.$
    Then, we can construct
    $\mathbf{T}_{k^{*}}\colon C_{k^{*}}^{(1)}\to C_{k^{*}}^{(0)}$
    so that exactly $\alpha$ elements of $C_{k^{*}}^{(0)}$ have preimage size $\beta+1$, and the remaining $|C_{k^{*}}^{(0)}|-\alpha$ have size $\beta$.
    
    Next, for each $l\neq k^{*}$, set 
    $a_{l}=\frac{|C_{l}^{(0)}|}{|C_{k^{*}}^{(0)}|}\in\mathbb{N},$
    so that $|C_{l}^{(0)}|=a_{l} |C_{k^{*}}^{(0)}|$ and $|C_{l}^{(1)}| = a_{l} |C_{k^{*}}^{(1)}|$.  Define 
    $\alpha_{l}=a_{l}\,\alpha,$
    and similarly construct
    $\mathbf{T}_{l}\colon C_{l}^{(1)}\to C_{l}^{(0)}$
    so that exactly $\alpha_{l}$ points in $C_{l}^{(0)}$ have preimage size $\beta+1$, and the remaining $|C_{l}^{(0)}|-\alpha_{l}$ have size $\beta$.
    
    Finally, let $\mathbf{T}(j) := \sum_{k=1}^{K} \mathbf{T}_{k}(j) \mathbb{I}(j \in C_{k}^{(1)}), j \in [n_{1}].$
    Then, $\mathbf{T}$ is onto, each $\mathbf{T}_{k}$ has size $\beta$ or $\beta+1$, and
    $|R_{\mathbf{T}}|
    =\sum_{k=1}^{K}\alpha_{k}
    =\alpha\sum_{k=1}^{K}a_{k}
    =\alpha\cdot\frac{n_{0}}{|C_{k^{*}}^{(0)}|}
    =\frac{r\,|C_{k^{*}}^{(0)}|}{n_{0}}\cdot\frac{n_{0}}{|C_{k^{*}}^{(0)}|}
    =r,$
    which completes the proof.
\end{proof}

\begin{proposition}\label{prop:matching4}
    Assume that $r > 0.$
    For a given $\mathbf{T},$ let $\bm{Z}$ with $Z_{j}^{(1)}=Z_{\mathbf{T}(j)}^{(0)}, j \in [n_{1}].$
    If $\mathbf{T}\in \mathcal{T}$ satisfied $|C_{k}^{(0)}|/n_0= |C_{k}^{(0)}\cap R_{\mathbf{T}}|/r$ for all $k \in [K],$
    we have that $\bm{Z} \in \mathcal{Z}^{\textup{Fair}}_0.$
\end{proposition}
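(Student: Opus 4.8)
The plan is to verify the fairness condition directly. By the definition in \cref{eq:def_delta}, showing $\bm{Z} \in \mathcal{Z}^{\textup{Fair}}_0$ (i.e. $\Delta(\bm{Z})=0$) is equivalent to establishing $|C_k^{(0)}|/n_0 = |C_k^{(1)}|/n_1$ for every $k \in [K]$, since $\Delta$ is a sum of nonnegative terms and vanishes exactly when each group-proportion difference does. The quantity I need to control is therefore $|C_k^{(1)}|$, the number of group-$1$ instances assigned to cluster $k$ under the induced labeling $Z_j^{(1)} = Z_{\mathbf{T}(j)}^{(0)}$.

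First I would rewrite $C_k^{(1)}$ in terms of the matching map. Since $Z_j^{(1)} = k$ holds precisely when $\mathbf{T}(j) \in C_k^{(0)}$, the set $C_k^{(1)}$ is the preimage $\mathbf{T}^{-1}(C_k^{(0)})$, so $|C_k^{(1)}| = \sum_{i \in C_k^{(0)}} |\mathbf{T}^{-1}(i)|$. The structural assumption $\mathbf{T} \in \mathcal{T}$ then tells me each fibre has size $\beta+1$ when $i \in R_{\mathbf{T}}$ and $\beta$ otherwise. Splitting the sum along $C_k^{(0)} = (C_k^{(0)} \setminus R_{\mathbf{T}}) \sqcup (C_k^{(0)} \cap R_{\mathbf{T}})$ and collecting terms yields $|C_k^{(1)}| = \beta|C_k^{(0)}| + |C_k^{(0)} \cap R_{\mathbf{T}}|$.

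The final step is to substitute the hypothesis $|C_k^{(0)} \cap R_{\mathbf{T}}| = r|C_k^{(0)}|/n_0$. This converts the expression into $|C_k^{(1)}| = (\beta + r/n_0)|C_k^{(0)}| = (n_1/n_0)|C_k^{(0)}|$, using $n_1 = \beta n_0 + r$. Dividing through by $n_1$ gives the required equality of proportions for every $k$, hence $\Delta(\bm{Z}) = 0$.

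I do not anticipate a genuine obstacle: the whole argument is a direct counting computation. The only conceptual point is recognizing that the fibre-size dichotomy of a map in $\mathcal{T}$ lets one decompose $|C_k^{(1)}|$ into the ``baseline'' contribution $\beta|C_k^{(0)}|$ plus the overlap $|C_k^{(0)} \cap R_{\mathbf{T}}|$ coming from the size-$(\beta+1)$ fibres. That decomposition is exactly where the stated hypothesis on $|C_k^{(0)} \cap R_{\mathbf{T}}|$ is engineered to plug in, so once it is in place the conclusion follows immediately.
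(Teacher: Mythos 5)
Your proposal is correct and matches the paper's own proof essentially step for step: the same fibre-counting identity $|C_k^{(1)}| = \sum_{i \in C_k^{(0)}} |\mathbf{T}^{-1}(i)| = \beta|C_k^{(0)}| + |C_k^{(0)} \cap R_{\mathbf{T}}|$, followed by the same substitution of the hypothesis and the identity $n_1 = \beta n_0 + r$ to conclude $|C_k^{(1)}| = (n_1/n_0)|C_k^{(0)}|$ and hence $\Delta(\bm{Z}) = 0$. There is nothing to add or correct.
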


\begin{proof}[Proof of \cref{prop:matching4}]
    Fix a $k\in[K]$.
    By definition of $\bm{Z}$ that
    $C_{k}^{(1)}=\{\,j:Z_{j}^{(1)}=k\}=\{\,j:Z_{\mathbf{T}(j)}^{(0)}=k\}$,
    we have that
    $$
    |C_{k}^{(1)}|
    =\sum_{i\in C_{k}^{(0)}}\bigl|\mathbf{T}^{-1}(i)\bigr|
    =\sum_{i\in C_{k}^{(0)}}\bigl(\beta+\mathbb{I} (i\in R_{\mathbf{T}}) \bigr)
    =\beta\,|C_{k}^{(0)}|+|\,C_{k}^{(0)}\cap R_{\mathbf{T}}\,|.
    $$
    By the sufficient condition $|\,C_{k}^{(0)}\cap R_{\mathbf{T}}\,|=(r/n_{0})\,|C_{k}^{(0)}|$ and since 
    $n_{1}=\beta\,n_{0}+r$, it follows that
    $$
    |C_{k}^{(1)}|
    =\beta\,|C_{k}^{(0)}|+\frac{r}{n_{0}}\,|C_{k}^{(0)}|
    =\frac{\beta\,n_{0}+r}{n_{0}}\,|C_{k}^{(0)}|
    =\frac{n_{1}}{n_{0}}\,|C_{k}^{(0)}|,
    $$
    which concludes the proof.
\end{proof}

\begin{proposition}\label{prop:relaxed_general}
    Assume that $r = 0.$
    Let $\mathbf{T}$ and $\mathbf{T}_{0}$ be the maps defined in \cref{sec:relax_G} for a given $m \le n_{1}.$
    Then, for any $\bm{Z}$ satisfying
    $ Z_{j}^{(1)}=Z_{\mathbf{T}(j)}^{(0)}, j \in [n_{1}] \setminus E $
    and
    $ Z_{j}^{(1)} = Z_{\mathbf{T}_{0}(j)}^{(0)}, j \in E, $
    we have that $\bm{Z} \in \mathcal{Z}_{m/n_{1}}^{\textup{Fair}}.$
\end{proposition}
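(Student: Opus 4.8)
The plan is to compare the given $\bm{Z}$ against a perfectly fair reference assignment obtained by pretending that no masking occurs. I would define $\tilde{\bm{Z}}$ by keeping the group-$0$ labels unchanged, $\tilde{Z}_i^{(0)} = Z_i^{(0)}$, and by setting $\tilde{Z}_j^{(1)} = Z_{\mathbf{T}(j)}^{(0)}$ for \emph{all} $j \in [n_1]$ (i.e., as if $E = \emptyset$). Since $r = 0$, the map $\mathbf{T}$ is an honest matching map in the sense of \cref{prop:matching2}, so that proposition immediately gives $\tilde{\bm{Z}} \in \mathcal{Z}^{\textup{Fair}}_0$. Writing $\tilde{C}_k^{(1)} = \{ j : \tilde{Z}_j^{(1)} = k \}$, this yields the exact balance $|\tilde{C}_k^{(1)}| / n_1 = |C_k^{(0)}| / n_0$ for every $k \in [K]$, which is the identity I will exploit below.

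Next I would observe that $\bm{Z}$ and $\tilde{\bm{Z}}$ differ only on the $m$ masked indices: for $j \in [n_1] \setminus E$ both assign $Z_{\mathbf{T}(j)}^{(0)}$, whereas for $j \in E$ the actual $\bm{Z}$ uses $\mathbf{T}_0$ and the reference uses $\mathbf{T}$. In particular the group-$0$ counts $|C_k^{(0)}|$ coincide under the two assignments, so the only discrepancy lives in the group-$1$ counts. The key bookkeeping step is the bound
\[
\sum_{k=1}^{K} \bigl| |C_k^{(1)}| - |\tilde{C}_k^{(1)}| \bigr| \le 2m,
\]
which I would justify by noting that each $j \in E$ is removed from at most one cluster (its reference label $\tilde{Z}_j^{(1)}$) and added to at most one cluster (its actual label $Z_j^{(1)}$), hence contributes at most $2$ to the total, while unmasked indices contribute nothing.

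Finally I would substitute the exact balance $|C_k^{(0)}|/n_0 = |\tilde{C}_k^{(1)}|/n_1$ into the definition of $\Delta$, collapsing each summand to a difference of group-$1$ proportions:
\[
\Delta(\bm{Z}) = \frac{1}{2} \sum_{k=1}^{K} \left| \frac{|C_k^{(0)}|}{n_0} - \frac{|C_k^{(1)}|}{n_1} \right| = \frac{1}{2 n_1} \sum_{k=1}^{K} \bigl| |\tilde{C}_k^{(1)}| - |C_k^{(1)}| \bigr| \le \frac{m}{n_1},
\]
so that $\bm{Z} \in \mathcal{Z}^{\textup{Fair}}_{m/n_1}$, as claimed. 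I do not anticipate any genuine obstacle: the whole argument hinges on freezing the group-$0$ partition and routing all the fairness violation through the $m$ reassigned points, and the only step needing mild care is the counting inequality above.
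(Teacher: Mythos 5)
Your proof is correct and follows essentially the same route as the paper's: your reference assignment $\tilde{\bm{Z}}$ (matching all of $[n_1]$ via $\mathbf{T}$) is exactly the intermediate term the paper introduces by adding and subtracting the full $\mathbf{T}$-sum before applying the triangle inequality, and your counting bound $\sum_{k}\bigl||C_k^{(1)}|-|\tilde{C}_k^{(1)}|\bigr|\le 2m$ is the paper's fact~(ii). The only cosmetic difference is that you justify perfect fairness of the reference assignment by explicitly invoking \cref{prop:matching2}, where the paper states it directly as a consequence of $r=0$.
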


\begin{proof}[Proof of \cref{prop:relaxed_general}]
    We first note the following two facts:
    (i) $\sum_{k=1}^{K} \vert \frac{1}{n_{0}}\sum_{i \in [n_{0}]}\mathbb{I}(Z_{i}^{(0)}=k) -\frac{1}{n_{1}}\sum_{j \in [n_{1}]}\mathbb{I}(Z_{\mathbf{T}(j)}^{(0)}=k) \rvert = 0$ for any given $\mathbf{T}$ due to the assumption $r = 0.$
    (ii) For any $\mathbf{T}$ and $\mathbf{T}_{0},$
    there exist nonnegative integers $m_{1},\dots,m_{K}$ with $\sum_{k}m_{k}=2m$ such that for each $k$,
    $\lvert\sum_{j \in E}\mathbb{I}(Z_{\mathbf{T}(j)}^{(0)}=k)-\sum_{j \in E}\mathbb{I}(Z_{\mathbf{T}_{0}(j)}^{(0)}=k)\rvert=m_{k}$.
    Therefore,
    \begin{equation*}
        \begin{split}
            \Delta(\bm{Z}) & = \frac{1}{2} \sum_{k=1}^{K}\biggl|\frac{1}{n_{0}}\sum_{i \in [n_{0}]}\mathbb{I}(Z_{i}^{(0)}=k) -\frac{1}{n_{1}}\sum_{j\in [n_{1}]}\mathbb{I}(Z_{j}^{(1)}=k)\biggr|
            \\
            & = \frac{1}{2} \sum_{k=1}^{K}\biggl|\frac{1}{n_{0}}\sum_{i \in [n_{0}]}\mathbb{I}(Z_{i}^{(0)}=k) -\frac{1}{n_{1}}\sum_{j \in [n_{1}] \setminus E}\mathbb{I}(Z_{j}^{(1)}=k) -\frac{1}{n_{1}}\sum_{j \in E}\mathbb{I}(Z_{j}^{(1)}=k)\biggr|
            \\
            & = \frac{1}{2} \sum_{k=1}^{K}\biggl|\frac{1}{n_{0}}\sum_{i \in [n_{0}]}\mathbb{I}(Z_{i}^{(0)}=k) -\frac{1}{n_{1}}\sum_{j \in [n_{1}] \setminus E}\mathbb{I}(Z_{\mathbf{T}(j)}^{(0)}=k) -\frac{1}{n_{1}}\sum_{j \in E}\mathbb{I}(Z_{\mathbf{T}_{0}(j)}^{(0)}=k)\biggr|
            \\
            & \le \frac{1}{2} \sum_{k=1}^{K}\biggl|\frac{1}{n_{0}}\sum_{i \in [n_{0}]}\mathbb{I}(Z_{i}^{(0)}=k) -\frac{1}{n_{1}}\sum_{j \in [n_{1}]}\mathbb{I}(Z_{\mathbf{T}(j)}^{(0)}=k)\biggr| 
            \\
            & + \frac{1}{2} \sum_{k=1}^{K} \biggl| \frac{1}{n_{1}} \sum_{j \in E} \mathbb{I}(Z_{\mathbf{T}(j)}^{(0)}=k) - \frac{1}{n_{1}} \sum_{j \in E} \mathbb{I}(Z_{\mathbf{T}_{0}(j)}^{(0)}=k) \biggr|
            \\
            & = 0 + \frac{1}{2} \frac{1}{n_{1}}\sum_{k=1}^{K}m_{k}=\frac{m}{n_{1}}.
        \end{split}
    \end{equation*}
    Thus $\Delta(\bm{Z}) \le m/n_{1}$, so $\bm{Z}\in\mathcal{Z}_{m/n_{1}}^{\textup{Fair}}$.
\end{proof}

\clearpage
\section{Details of FBC}\label{sec:fbc}

\subsection{Calculation of the acceptance probability in STEP 1 \texorpdfstring{(Sampling $\mathbf{T}, \mathbf{T}_{0}, E$) of \cref{sec:inference}}{in Section 5}}\label{sec:accept_prob-appen}

In this section, we provide a detailed explanation about the calculation of $\alpha'(\mathbf{T}',\mathbf{T}_{0}',E)$ in \cref{eq:acceptance_ratio}.
First, we have that
\begin{equation*}
    \begin{aligned}
        \alpha'(\mathbf{T}',\mathbf{T}_{0}',E')&=\frac{p(\mathbf{T}',\mathbf{T}_{0}',E' \vert \Phi,\mathcal{D}) q((\mathbf{T}',\mathbf{T}_{0}',E') \to (\mathbf{T},\mathbf{T}_{0},E))}{p(\mathbf{T},\mathbf{T}_{0},E \vert \Phi,\mathcal{D}) q((\mathbf{T},\mathbf{T}_{0},E) \to (\mathbf{T}',\mathbf{T}_{0}',E'))}
        \\
        & =
        \frac{p(\mathbf{T}',\mathbf{T}_{0}',E')\mathcal{L}(\mathcal{D};\Phi,\mathbf{T}',\mathbf{T}_{0}', E')
        q( (\mathbf{T}',\mathbf{T}_{0}', E') \to (\mathbf{T},\mathbf{T}_{0},E) ) }{p(\mathbf{T},\mathbf{T}_{0},E)\mathcal{L}(\mathcal{D};\Phi,\mathbf{T},\mathbf{T}_{0}, E) 
        q( (\mathbf{T},\mathbf{T}_{0},E) ) \to (\mathbf{T}',\mathbf{T}_{0}', E') )}
        \\
        & =\frac{e(\mathbf{T}')\mathcal{L}(\mathcal{D};\Phi,\mathbf{T}', \mathbf{T}_{0}', E')}{e(\mathbf{T})\mathcal{L}(\mathcal{D};\Phi,\mathbf{T},\mathbf{T}_{0}, E)},
    \end{aligned}
\end{equation*}
where the last equality holds since
\begin{equation*}
    \begin{split}
        q((\mathbf{T}', \mathbf{T}_{0}', E') \to (\mathbf{T},\mathbf{T}_{0},E)) & = q((\mathbf{T},\mathbf{T}_{0},E) \to (\mathbf{T}',\mathbf{T}_{0}',E'))
    \end{split}
\end{equation*}
and 
$
p (\mathbf{T}',\mathbf{T}_{0}',E') / p(\mathbf{T},\mathbf{T}_{0},E) = \mathbf{e}(\mathbf{T}') \big/ \mathbf{e}(\mathbf{T}),
$
because the priors of $\mathbf{T}_0$ and $E$ are uniform.

For the likelihood ratio, if we use a conjugate prior which enables the calculation of marginal likelihood,
we have
\begin{equation*}
    \frac{\mathcal{L}(\mathcal{D};\Phi,\mathbf{T}',\mathbf{T}_{0}',E')}{\mathcal{L}(\mathcal{D};\Phi,\mathbf{T},\mathbf{T}_{0},E)}
    = \frac{\prod_{c\in\mathcal{C}}m(X^{c}|\mathbf{T}',\mathbf{T}_{0}',E')}{\prod_{c\in\mathcal{C}}m(X^{c}|\mathbf{T},\mathbf{T}_{0},E)},
\end{equation*}
where
\begin{equation}\label{eq:marginal_likelihood}
    m(X^{c} \vert \mathbf{T}, \mathbf{T}_{0}, E)=\int_\Theta \left[ \prod_{i\in c} f(X_i^{(0)}|\phi_c) \prod_{j \in J(c; \mathbf{T}, \mathbf{T}_{0}, E)} f(X_j^{(1)}|\phi_c) \right] H(d\theta), 
\end{equation}
$J(c; \mathbf{T}, \mathbf{T}_{0}, E) := \{ j \in E: \mathbf{T}_{0}(j) \in c \} \cup \{ j \in [n_{1}] \setminus E: \mathbf{T}(j) \in c \}$ for $c\in\mathcal{C}$ and
$X^{c} = \{X_i^{(0)}:i\in c\}\cup\{X_j^{(1)}: j \in J(c; \mathbf{T}, \mathbf{T}_{0}, E) \}$ for $c \in \mathcal{C}.$
For a non-conjugate $H,$ we have
\begin{equation*}
    \frac{\mathcal{L}(\mathcal{D};\Phi,\mathbf{T}',\mathbf{T}_{0}',E')}{\mathcal{L}(\mathcal{D};\Phi,\mathbf{T},\mathbf{T}_{0},E)}
    = \frac{
    \prod_{c\in\mathcal{C}} \left[ \prod_{i\in c} f(X_i^{(0)}|\phi_c)
    \prod_{j \in J(c; \mathbf{T}', \mathbf{T}_{0}', E')} f(X_j^{(1)}|\phi_c) \right]
    }{
    \prod_{c\in\mathcal{C}} \left[ \prod_{i\in c} f(X_i^{(0)}|\phi_c)
    \prod_{j \in J(c; \mathbf{T}, \mathbf{T}_{0}, E)} f(X_j^{(1)}|\phi_c) \right]
    }.
\end{equation*}
Note that these calculations are derived from the equivalent representation in \cref{sec:equiv_rep}.

\clearpage
\subsection{Sampling algorithm for STEP 2 \texorpdfstring{(Sampling $\Phi$) of \cref{sec:inference}}{of Section 5}}\label{sec:details_STEP2s}

Here, we consider the following two cases.
When $m(X^{c} \vert \mathbf{T}, \mathbf{T}_{0}, E)$ can be easily computed (e.g., $H$ is a conjugate prior), we sample $\mathcal{C}\sim p(\mathcal{C}\vert\mathcal{D}, \mathbf{T}, \mathbf{T}_{0}, E).$
Otherwise, when $m(X^{c} \vert \mathbf{T}, \mathbf{T}_{0}, E)$ is intractable, we sample $(\mathcal{C},\bm{\phi})\sim p(\mathcal{C},\bm{\phi}\vert\mathcal{D}, \mathbf{T}, \mathbf{T}_{0}, E).$
If the marginal likelihood is computable, a direct adaptation of Algorithm 3 from \cite{neal2000markov,maceachern1998estimating} is applicable.
Otherwise, when the marginal likelihood is not computable, Algorithm 8 from \cite{neal2000markov} can be applied.
Wherever the meaning is clear, we abbreviate $J(c ; \mathbf{T}, \mathbf{T}_{0}, E)$ by $J(c)$ in this section.

\paragraph{Conjugate prior}
The modification of Algorithm 3 for FBC is described as below.
\begin{enumerate}
    \item Initialize $\mathcal{C}=\{[n_{0}]\}$ (i.e., a single cluster)
    \item Repeat the following steps $N$ times, to obtain $N$ samples.
    For $i=1,\ldots,n_{0}$: Remove element $i\in[n_{0}]$ and its matched elements in $J(\{ i\}; \mathbf{T}, \mathbf{T}_{0}, E) := \{ j \in E: \mathbf{T}_{0}(j) = i \} \cup \{ j \in [n_{1}] \setminus E: \mathbf{T}(j) = i \}$ from $\mathcal{C}$. 
    Then, place them 
    \begin{itemize}[topsep=0pt]
        \item to $c'\in\mathcal{C}\setminus i$ with probability 
        \begin{equation*}
            \propto(|c'|+\gamma)\frac{m(X^{c'}\cup X^{\{ i \}}|\mathbf{T}, \mathbf{T}_{0}, E)}{m(X^{c'}|\mathbf{T}, \mathbf{T}_{0}, E)}
        \end{equation*}
        where $X^{\{ i \}}:=\{X_i^{(0)}\}\cup\{ X_j^{(1)}:j\in J(\{ i\}) \}$.
        \item to a new cluster with probability 
        \begin{equation*}
            \propto\gamma\frac{V_{n_0}(t+1)}{V_{n_0}(t)}m(X^{\{ i \}}|\mathbf{T}, \mathbf{T}_{0}, E)
        \end{equation*}
        where $t$ is a number of clusters when $X^{\{ i \}}$ are removed. 
    \end{itemize}
\end{enumerate}

\begin{proposition}\label{prop:alg_validity}
    The above modification of Algorithm 3 of \cite{neal2000markov} is a valid Gibbs sampler.
\end{proposition}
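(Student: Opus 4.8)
The plan is to show that, once $(\mathbf{T},\mathbf{T}_{0},E)$ are held fixed (as they are throughout STEP 2), the proposed model collapses \emph{exactly} to a standard Mixture of Finite Mixtures (MFM) over the $n_{0}$ group-$0$ instances, after which validity is inherited from the known validity of Algorithm 3 of \cite{neal2000markov} for MFM as derived in \cite{miller2018mixture}. The first step is to identify the correct target. Conditional on $(\mathbf{T},\mathbf{T}_{0},E)$, every group-$1$ label is a deterministic function of the group-$0$ labels through $Z_j^{(1)}=Z_{\mathbf{T}(j)}^{(0)}$ (for $j\notin E$) or $Z_j^{(1)}=Z_{\mathbf{T}_{0}(j)}^{(0)}$ (for $j\in E$). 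Hence the partition $\mathcal{C}$ of $[n_{0}]$ determines the whole assignment $\bm{Z}$, the chain's state space is precisely the set of partitions of $[n_{0}]$, and $p(\mathcal{C}\vert\mathcal{D},\mathbf{T},\mathbf{T}_{0},E)$ is the distribution the sampler must leave invariant.

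Second, I would make the reduction explicit by bundling each group-$0$ index $i$ with its matched group-$1$ indices $J(\{i\};\mathbf{T},\mathbf{T}_{0},E)$. Because $\mathbf{T}$ and $\mathbf{T}_{0}$ are functions and each $j$ lies in exactly one of $E$ or $[n_{1}]\setminus E$, the sets $J(\{i\})$ are pairwise disjoint and $J(c)=\bigcup_{i\in c}J(\{i\})$. Consequently the per-cluster marginal likelihood of \cref{eq:marginal_likelihood} factorizes as $m(X^{c}\vert\mathbf{T},\mathbf{T}_{0},E)=\int_{\Theta}\prod_{i\in c}g(\tilde X_i\vert\phi)\,H(d\phi)$, where $\tilde X_i:=X^{\{i\}}$ and $g(\tilde X_i\vert\phi):=f(X_i^{(0)}\vert\phi)\prod_{j\in J(\{i\})}f(X_j^{(1)}\vert\phi)$. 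This is exactly the marginal likelihood of a single composite observation $\tilde X_i$ under emission density $g(\cdot\vert\phi)$ and base measure $H$, so the conditional model is a standard MFM on the $n_{0}$ composite observations $\tilde X_1,\dots,\tilde X_{n_0}$ with the same partition prior $p_{\mathcal{C}}$.

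Third, I would verify the full conditional directly. Removing index $i$ together with its bundle yields a partition of $[n_{0}]\setminus\{i\}$, and computing $p(\,i\mapsto c'\mid\mathcal{C}_{-i},\mathcal{D},\mathbf{T},\mathbf{T}_{0},E)$ from the joint $p_{\mathcal{C}}(\mathcal{C})\prod_{c}m(X^{c})$ splits into a prior ratio and a likelihood ratio. The MFM prior ratio contributes $\gamma^{(|c'|+1)}/\gamma^{(|c'|)}=|c'|+\gamma$ for an existing cluster $c'$ and $\gamma\,V_{n_0}(t+1)/V_{n_0}(t)$ for a new singleton cluster, while the likelihood ratio contributes $m(X^{c'}\cup X^{\{i\}})/m(X^{c'})$ and $m(X^{\{i\}})$ respectively. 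These match verbatim the reassignment probabilities stated in the modified Algorithm 3, so each update draws from the correct full conditional of $p(\mathcal{C}\vert\mathcal{D},\mathbf{T},\mathbf{T}_{0},E)$ and the sampler is valid.

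The only genuinely non-routine point—and thus the main obstacle—is justifying that the group-$1$ points may be treated atomically with their anchors rather than being resampled individually, i.e.\ that the composite-observation reduction is \emph{exact}. This rests entirely on the deterministic tie $Z_j^{(1)}=Z_{\mathbf{T}(j)}^{(0)}$ (or $Z_{\mathbf{T}_{0}(j)}^{(0)}$) holding for all $j$ with $(\mathbf{T},\mathbf{T}_{0},E)$ frozen, which guarantees that a bundle $\tilde X_i$ never splits across clusters and that the state space cleanly reduces to partitions of $[n_{0}]$. Once this atomicity is established, the remaining identities are exactly the standard MFM full-conditional computations of \cite{miller2018mixture}, and no further argument is needed.
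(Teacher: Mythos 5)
Your proof is correct, and its computational core (your third step) is exactly the paper's own argument: the paper derives the full conditionals by writing the joint as $p_{\mathcal{C}}(\mathcal{C})\prod_{c\in\mathcal{C}}m(X^{c}\vert\mathbf{T},\mathbf{T}_{0},E)$ and reading off the prior ratio $(|c'|+\gamma)$ for an existing cluster (resp.\ $\gamma\,V_{n_0}(t+1)/V_{n_0}(t)$ for a new one) times the corresponding marginal-likelihood ratio, precisely as you do. Where you go beyond the paper is the explicit composite-observation reduction: bundling each $i\in[n_0]$ with its pairwise-disjoint matched set $J(\{i\};\mathbf{T},\mathbf{T}_{0},E)$ and observing that \cref{eq:marginal_likelihood} becomes the marginal likelihood of the bundles $\tilde X_i$ under the emission density $g$, so that conditional on $(\mathbf{T},\mathbf{T}_{0},E)$ the model is literally an MFM on $n_0$ composite data points with the same partition prior. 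The paper leaves this atomicity observation implicit (it simply removes ``element $i$ and its matched elements'' as a unit and computes). Your framing buys a cleaner justification---validity is inherited wholesale from \cite{miller2018mixture} and \cite{neal2000markov} rather than re-derived---at the cost of one small caveat worth stating: the bundle densities $g(\tilde X_i\vert\phi)$ are not identical across $i$ (bundle sizes differ), so the conditional model is an MFM with heterogeneous emission densities rather than i.i.d.\ ones; this is harmless because neither the partition prior nor the full-conditional computation ever uses identical distribution of the observations, and your direct verification in the third step covers the point in any case.
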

\begin{proof}[Proof of \cref{prop:alg_validity}]
    
    The posterior density can be formulated as follows:
    \begin{align*}
        p\left(\mathcal{C}\big\vert \mathcal{D}, \mathbf{T},\mathbf{T}_0,E\right)
        &\propto p(\mathcal{C})\cdot p\left(X_{1:n_0}^{(0)},X_{1:n_1}^{(1)} \big\vert \mathcal{C},\mathbf{T},\mathbf{T}_0,E \right)\\
        &=V_{n_0}(t) \prod_{c\in\mathcal{C}}\gamma^{(|c|)}\cdot \prod_{c\in\mathcal{C}}m(X^c|\mathbf{T},\mathbf{T}_0,E) 
    \end{align*}
        
    To justify the proposed modification of Algorithm 3, we only need to calculate the probabilities of placing $X_i^{(0)}$ and its matched elements $\{X_j^{(1)};j\in J(c(i))\}$: 
    (i) to an existing partition $c'$, 
    or
    (ii) to a new cluster. 
    Let $\mathcal{C}_{-i}$ be the collection of clusters where $X_i^{(0)}$ and its matched elements $\{X_j^{(1)}:j\in J(\{i\})\}$ are removed from $\mathcal{C}.$ 
    The calculation can be done as follows: 
    \begin{enumerate}
        \item[(i)] to existing $c'$: 
        
        The term $\gamma^{(|c'|)}$ in the prior term $p(\mathcal{C})\propto V_{n_0}(t)\prod_{c\in\mathcal{C}}\gamma^{(|c|)}$ changes to $\gamma^{(|c'|+1)}$ for this particular $c'$.
        The marginal likelihood $m(X^{c'})$ changes to $m(X^{c'}\cup X^{\{i\}})$ for this particular $c'$.
        Hence, we have the following conditional probability: 
        \begin{align*}
            p(i\to c'| \mathcal{C}_{-i},\mathcal{D},\mathbf{T},\mathbf{T}_0,E)
            & \propto\frac{\gamma^{(|c'|+1)}}{\gamma^{(|c'|)}}\frac{m(X^{c'}\cup X^{\{ i \}}|\mathbf{T}, \mathbf{T}_{0}, E)}{m(X^{c'} | \mathbf{T},\mathbf{T}_0,E)}
            \\
            &= (|c'|+\gamma)\frac{m(X^{c'}\cup X^{\{ i \}} \}|\mathbf{T},\mathbf{T}_0,E)}{m(X^{c'}|\mathbf{T},\mathbf{T}_0,E)}
        \end{align*}
        
        \item[(ii)] to a new cluster: 
        
        The prior term $V_{n_0}(t)\prod_{c\in\mathcal{C}}\gamma^{(|c|)}$ changes into 
        $V_{n_0}(t+1)\left[\prod_{c\in\mathcal{C}}\gamma^{|c|}\right]\gamma$. 
        Hence, we have the following conditional probability: 
        \begin{align*}
            &p(i\to \textup{new}| \mathcal{C}_{-i},\mathcal{D},\mathbf{T},\mathbf{T}_0,E)\\
            & \propto \gamma\frac{V_{n_0}(t+1)}{V_{n_0}(t)}m(X^{\{ i \}}\vert \mathbf{T},\mathbf{T}_0,E)
        \end{align*}
    \end{enumerate}
\end{proof}

\paragraph{Non-conjugate prior}

When using a non-conjugate prior, we can use Algorithm 8 instead of Algorithm 3.
The outline of implementation of Algorithm 8 for FBC can be formulated similar to those of Algorithm 3, as below. 

\begin{enumerate}
    \item Initialize $\mathcal{C}=\{[n_{0}]\}$ (i.e., a single cluster) with $\phi_{[n_0]}\sim H$. 
    \item Repeat the following steps $N$ times, to obtain $N$ samples.
    For $i=1,\ldots,n_{0}$: Remove element $i\in[n_{0}]$ and its matched elements in $J(\{ i \})$ from $\mathcal{C}$. 
    Then, generate $m$ independent auxiliary variables $\phi^{(1)},\ldots,\phi^{(m)}\sim H$. 
    Compute the assignment weights as:
    \begin{align*}
    w_{c'} &=(|c'|+\gamma)
             \prod_{x\in X^{\{ i\}}}f(x\vert\phi_{c'}),
             & c'\in\mathcal C\setminus{i},\\[2pt]
    w_{\textup{aux},h'}
           &=\frac{\gamma}{m}
             \frac{V_{n_0}(t+1)}{V_{n_0}(t)}
             \prod_{x\in X^{\{ i\}}}f(x\vert\phi^{(h')}),
             & h'=1,\dots,m ,
    \end{align*}
    where $X^{\{ i \}}:=\{X_i^{(0)}\}\cup\{ X_j^{(1)}:j\in J(\{ i \}) \}$ and $t$ is a number of clusters when $X^{\{ i \}}$ are removed. 
    Then, place them 
    \begin{itemize}[topsep=0pt]
        \item to $c'\in\mathcal{C}\setminus i$ with probability $\propto w_{c'}$, or
        \item to a new randomly chosen cluster $h$ among $m$ auxiliary components, with probability $\propto w_{\textup{aux},h}$.
    \end{itemize}
    Then, discard all auxiliary variables which are not chosen. 
\end{enumerate}

\begin{proposition}\label{prop:alg_validity_nonconjugate}
    The above modification of Algorithm 8 of \cite{neal2000markov} is a valid Gibbs sampler.
\end{proposition}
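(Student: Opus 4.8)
The plan is to transcribe Neal's auxiliary-variable justification of Algorithm 8 into the present setting, carrying along the matched-block structure exactly as was done for the conjugate case in the proof of \cref{prop:alg_validity}. First I would write down the target joint posterior over $(\mathcal{C}, \bm{\phi})$. From the equivalent representation in \cref{sec:equiv_rep} and the conditional independence of the observations given the cluster parameters, holding $(\mathbf{T}, \mathbf{T}_0, E)$ fixed,
\[
    p(\mathcal{C}, \bm{\phi} \mid \mathcal{D}, \mathbf{T}, \mathbf{T}_0, E) \propto V_{n_0}(t) \prod_{c \in \mathcal{C}} \left[ \gamma^{(|c|)} \, H(d\phi_c) \prod_{x \in X^c} f(x \mid \phi_c) \right],
\]
where $t = |\mathcal{C}|$ and $X^c = \{X_i^{(0)} : i \in c\} \cup \{X_j^{(1)} : j \in J(c)\}$. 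The only structural departure from the standard MFM posterior is that each cluster's likelihood factor is a product over the whole matched block $X^c$, not merely over the group-$0$ points; since $(\mathbf{T}, \mathbf{T}_0, E)$ are frozen in this step, $J(c)$ is a deterministic function of $\mathcal{C}$ and the block $X^{\{i\}}$ moves as a single unit whenever $X_i^{(0)}$ is reassigned.

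Next I would introduce Neal's augmentation: enlarge the state by $m$ auxiliary parameters $\phi^{(1)}, \dots, \phi^{(m)}$ representing candidate new clusters, and define an augmented target on $(\mathcal{C}, \bm{\phi}, \phi^{(1:m)})$ whose $(\mathcal{C}, \bm{\phi})$-marginal is the posterior above and under which the unused auxiliaries are i.i.d.\ draws from $H$. Removing $X^{\{i\}}$ from its current cluster and computing the full conditional of its block assignment under this augmented target, the prior change $\gamma^{(|c'|)} \to \gamma^{(|c'|+1)}$ contributes the factor $(|c'| + \gamma)$ and the likelihood contributes $\prod_{x \in X^{\{i\}}} f(x \mid \phi_{c'})$, giving exactly $w_{c'}$; for assignment to a fresh auxiliary component the MFM ratio $V_{n_0}(t+1)/V_{n_0}(t)$ together with the factor $\gamma$ appears, split over the $m$ equally likely auxiliary labels, giving $w_{\mathrm{aux},h'}$. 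I would then invoke the standard argument that drawing the auxiliaries from $H$ and performing this Gibbs reassignment both leave the augmented target invariant, so the induced chain on $(\mathcal{C}, \bm{\phi})$ has the desired posterior as its stationary distribution; discarding the unchosen auxiliaries recovers the stated algorithm.

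The main obstacle I anticipate is the bookkeeping of the block-plus-singleton structure rather than any deep calculation. I must verify that excising $X_i^{(0)}$ together with its \emph{entire} matched set $\{X_j^{(1)} : j \in J(\{i\})\}$ leaves a legitimate partition-and-parameter state, and that the candidate-assignment factor is genuinely the block product $\prod_{x \in X^{\{i\}}} f(x \mid \phi)$ and not a single-point factor. The genuinely delicate point is the singleton case: when the removed block constituted a cluster by itself, its parameter must be retained as one of the $m$ auxiliary draws (as in Neal's original Algorithm 8) so that reversibility of the move is preserved; I would state this explicitly and check that with this convention the reverse move has matching weight. Everything else is a direct transcription of Neal's derivation in \cite{neal2000markov}, with the sequence $V_{n_0}(\cdot)$ playing the role of the Dirichlet-process normalizing constants.
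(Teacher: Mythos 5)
Your proposal is correct, but it proves validity by a genuinely different route than the paper. The paper's proof reduces Algorithm 8 to the conjugate sampler of \cref{prop:alg_validity}: it writes the probability of opening a new cluster as $\gamma\,\frac{V_{n_0}(t+1)}{V_{n_0}(t)}\,m(X^{\{i\}}\mid\mathbf{T},\mathbf{T}_0,E)$ --- the exact Algorithm-3 weight --- and then replaces the intractable marginal likelihood by the Monte-Carlo average $\frac{1}{m}\sum_{h'=1}^{m}\prod_{x\in X^{\{i\}}}f(x\mid\phi^{(h')})$ over the fresh auxiliary draws, so that each auxiliary label $h$ receives weight $w_{\mathrm{aux},h}$; validity is thus established in an approximate sense (the ``$\approx$'' step), becoming exact only as $m\to\infty$. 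You instead transcribe Neal's original auxiliary-variable argument: you enlarge the state with the $m$ candidate parameters, exhibit an augmented target whose $(\mathcal{C},\bm{\phi})$-marginal is the desired posterior and under which unused auxiliaries are i.i.d.\ $H$, and show that both the auxiliary refresh and the block reassignment are full-conditional (Gibbs) updates of that augmented target. This buys \emph{exact} invariance for every finite $m$, not just asymptotic correctness. The price is the bookkeeping you yourself flag, and it is a real issue: when the removed block $X^{\{i\}}$ currently constitutes a singleton cluster, its parameter must be retained as one of the $m$ auxiliaries (Neal's convention) so that the auxiliaries have their correct conditional distribution given the current state; this convention appears neither in the paper's algorithm statement (which always draws all $m$ auxiliaries fresh from $H$) nor in its proof, and without it the exact-invariance argument does not go through. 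So your route is the more rigorous one, and it surfaces a detail that the paper's approximation-based justification silently sidesteps.
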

\begin{proof}[Proof of \cref{prop:alg_validity_nonconjugate}]    
    To justify the proposed modification of Algorithm 8, we only need to calculate the probabilities of replacing $X^{\{ i \}}$: 
    (i) to an existing partition $c'$, 
    or
    (ii) to a new cluster. 
    
    The calculation can be done as follows: 
    \begin{enumerate}
        \item[(i)] to existing $c'$: 
        
        The term $\gamma^{(|c'|)}$ in the prior term $p(\mathcal{C})\propto V_{n_0}(t)\prod_{c\in\mathcal{C}}\gamma^{(|c|)}$ changes to $\gamma^{(|c'|+1)}$ for this particular $c'$.
        The likelihood is multiplied by $\prod_{x\in X^{\{ i \}}}f(x\vert\phi_{c'})$ for this particular $c'$.
        Hence, we have the following conditional probability: 
        \begin{align*}
            p(i\to c'| \mathcal{C}_{-i},\mathcal{D},\mathbf{T},\mathbf{T}_0,E)
            & \propto\frac{\gamma^{(|c'|+1)}}{\gamma^{(|c'|)}}\prod_{x\in X^{\{ i \}}}f(x\vert\phi_{c'})
            \\
            &= (|c'|+\gamma)\prod_{x\in X^{\{ i \}}}f(x\vert\phi_{c'}).
        \end{align*}
        
        \item[(ii)] to a new cluster: 
        A new cluster $h$ is chosen uniformly among the auxiliary $m$ components. 
        Then, we have the following conditional probability by Monte-Carlo approximation:
        \begin{align*}
            p(i\to \textup{new}| \mathcal{C}_{-i},\mathcal{D},\mathbf{T},\mathbf{T}_0,E)
            &\propto \gamma\frac{V_{n_0}(t+1)}{V_{n_0}(t)}m(X^{\{ i \}}\vert \mathbf{T},\mathbf{T}_0,E)\\
            &= \gamma\frac{V_{n_0}(t+1)}{V_{n_0}(t)}\int_\Theta \left[   \prod_{x \in X^{\{ i \}}} f(x|\phi) \right] H(d\phi) \\
            &\approx \gamma\frac{V_{n_0}(t+1)}{V_{n_0}(t)}\frac{1}{m}\sum_{h'=1}^{m}\prod_{x\in X^{\{ i \}}}f(x|\phi^{(h')})\\
            &=\sum_{h'=1}^{m}w_{\textup{aux},h'}.
        \end{align*}
        From the fact that $h$ is uniformly chosen among $m$-auxiliary components, we have the conditional probability to a new cluster $h$ as follows:
        \begin{align*}
            p(i\to h|\mathcal{C}_{-i},\mathcal{D},\mathbf{T},\mathbf{T}_0,E)
            &\propto p(i\to h|i\to\textup{new},\cdot)p(i\to\textup{new}|\mathcal{C}_{-i},\mathcal{D},\mathbf{T},\mathbf{T}_0,E)\\
            &\propto\frac{w_{\textup{aux},h}}{\sum_{h'=1}^{m}w_{\textup{aux},h'}}\cdot\sum_{h'=1}^{m}w_{\textup{aux},h'}\\
            &=w_{\textup{aux},h}.
        \end{align*}
    \end{enumerate}
\end{proof}

\subsection{Pseudo-code of the overall FBC algorithm}\label{sec:details_overall}

\cref{alg:FBC} below provides the pseudo-code of our proposed MCMC algorithm, which is a combination of the two steps in \cref{sec:alg-step1,sec:alg-step2}. 
\begin{algorithm}[h]
    \caption{FBC algorithm}
    \begin{algorithmic}[1]
        \STATE \textbf{Inputs:}
        Data ($\mathcal{D} = \mathcal{D}^{(0)} \cup \mathcal{D}^{(1)}$),
        Maximum number of iterations for inference ($\textup{max}_{\textup{iter}}$).

        \STATE Initialize $\Phi^{(0)},\mathbf{T}^{(0)},\mathbf{T}_{0}^{(0)},E^{(0)}$
        \FOR{$t = 1$ to $\textup{max}_{\textup{iter}}$}
            \STATE (STEP 1) Propose $(\mathbf{T}',\mathbf{T}_{0}',E')$ and compute the acceptance probability $\alpha(\mathbf{T}',\mathbf{T}_{0}',E')$
            \STATE Sample $u \sim \textup{Uniform}(0,1)$
            \IF{$u<\alpha(\mathbf{T}',\mathbf{T}_{0}',E')$}
                \STATE Accept) $\mathbf{T}^{(t)}=\mathbf{T}',\mathbf{T}_{0}^{(t)'}=\mathbf{T}_{0},E^{(t)}=E'$
            \ELSE
                \STATE Reject) $\mathbf{T}^{(t)}=\mathbf{T}^{(t-1)},\mathbf{T}_{0}^{(t)}=\mathbf{T}_{0}^{(t-1)},E^{(t)}=E^{(t-1)}$
            \ENDIF
            \STATE (STEP 2) Sample $\Phi^{(t)}$ from posterior $p(\Phi|\mathbf{T}^{(t)},\mathbf{T}_{0}^{(t)},E^{(t)},\mathcal{D})$
        \ENDFOR
        
        \STATE \textbf{Return:}
        Posterior samples $\left\{\left(\Phi^{(t)},\mathbf{T}^{(t)},\mathbf{T}_{0}^{(t)},E^{(t)}\right)\right\}_{t=1}^{\textup{max}_\textup{iter}}$.
    \end{algorithmic}
    \label{alg:FBC}
\end{algorithm}


\clearpage
\section{Extension of FBC for a multinary sensitive attribute}\label{sec:multiple_s}
This section explains that FBC can be modified for the case of a multinary sensitive attribute (i.e., the number of groups $\ge 3$). For simplicity, we consider three sensitive groups.
Extension to more than three groups can be done similarly.

Let $\mathcal{D}^{(2)} = \{ X_i^{(2)} \}_{i=1}^{n_{2}},$ along with existing $\mathcal{D}^{(0)},\mathcal{D}^{(1)}.$ 
Assume that $n_0 \le \min \{n_1,n_2\}.$
Similar to the binary sensitive case, we consider $\mathbf{T}_{1}: [n_{1}] \to [n_{0}]$ and $\mathbf{T}_{2}: [n_{2}] \to [n_{0}]$ as random matching maps from $\mathcal{D}^{(1)}$ to $\mathcal{D}^{(0)}$ and from $\mathcal{D}^{(2)}$ to $\mathcal{D}^{(0)},$ respectively. 
We also consider arbitrary functions $\mathbf{T}_{01}:[n_1]\to[n_0]$ and $\mathbf{T}_{02}:[n_2]\to[n_0]$ and arbitrary subsets $E_1\in[n_1],E_2\in[n_2]$ of sizes $m_1$ and $m_2$.
Let $\mathcal{C}$ be a partition of $[n_{0}]$ induced by $ \bm{Z} $ such that $Z_i^{(0)} \vert \bm{\pi} \overset{\textup{i.i.d.}}{\sim} \textup{Categorical} (\cdot \vert \bm{\pi}), \forall i \in [n_{0}].$
Then, similar to \cref{sec:equiv_rep}, we consider the generative model 
\begin{align*}
    & \phi_c\overset{\textup{i.i.d.}}{\sim}H, c\in\mathcal{C}
    \\
    & X_i^{(0)} \overset{\textup{ind}}{\sim} f(\cdot \vert \phi_c), i \in c
    \\
    & X_j^{(1)} \overset{\textup{ind}}{\sim}
    \begin{cases}
        f(\cdot\vert\phi_c) & \forall j\in[n_1]\setminus E_1 \textup{ s.t. } \mathbf{T}_{1}(j)\in c \\
        f(\cdot\vert\phi_c) & \forall j\in E_1 \textup{ s.t. } \mathbf{T}_{01}(j)\in c
    \end{cases}
    \\
    & X_j^{(2)} \overset{\textup{ind}}{\sim}
    \begin{cases}
        f(\cdot\vert\phi_c) & \forall j\in[n_2]\setminus E_2 \textup{ s.t. } \mathbf{T}_{2}(j)\in c \\
        f(\cdot\vert\phi_c) & \forall j\in E_2 \textup{ s.t. } \mathbf{T}_{02}(j)\in c
    \end{cases}
\end{align*}
The inference algorithm can be also modified accordingly.
Furthermore, $\Delta(\mathbf{Z})$ is also generalized as follows.
$G$
\paragraph{Fairness measure for a multinary sensitive attribute}
Let $B$ be the number of sensitive groups.
For $b \in \{0, 1, \dots, B-1\},$ let $n_b$ be the number of samples in group $b$, and denote $\{ Z_{i}^{(b)} \}_{i=1}^{n_{b}}$ as the cluster assignments of group $b$.
Then $\Delta(\bm{Z})$ is generalized as:
$$
\Delta(\bm{Z}) := \frac{1}{2(B-1)} \sum_{k=1}^{K} \sum_{b=1}^{B-1}
\left|
\frac{1}{n_{0}} \sum_{i=1}^{n_{0}} \mathbb{I}(Z^{(0)}_i = k)
-
\frac{1}{n_{b}} \sum_{j=1}^{n_{b}} \mathbb{I}(Z^{(b)}_j = k)
\right|\in[0,1]
$$
The \texttt{Bal} is also generalized as:
$\texttt{Bal} := \min_{k \in [K]} \texttt{Bal}_{k}$
where
$$
\texttt{Bal}_k := \min_{b_1 \neq b_2} \left\{
\frac{|C_k^{(b_1)}|}{|C_k^{(b_2)}|}, \frac{|C_k^{(b_2)}|}{|C_k^{(b_1)}|}
\right\},
$$
where $ C_k^{(b)} $ denotes the set of samples in group $b$ assigned to the $k^{\textup{th}}$ cluster. 
Note that the above formulation of $\Delta(\bm{Z})$ with $B=2$ coincides with the definition of $\Delta(\bm{Z})$ in the binary sensitive case, which is defined in \cref{eq:def_delta}. 
\cref{prop:relaxed_general_3group} below further shows that FBC can control the fairness level $\Delta(\bm{Z})$ for a multinary sensitive attribute.

\begin{proposition}\label{prop:relaxed_general_3group}
    Denote $n_0$, $n_1$, and $n_2$ as the number of samples in three sensitive groups.  
    Let $\mathbf{T}_1 : [n_1] \to [n_0]$ and $\mathbf{T}_2 : [n_2] \to [n_0]$ be matching maps from groups $1$ and $2$ to group $0.$
    Let consider arbitrary functions $\mathbf{T}_{01}:[n_1]\to[n_0], \mathbf{T}_{02}:[n_2]\to[n_0]$ and arbitrary subsets $E_1\in[n_1],E_2\in[n_2]$ of sizes $m_1$ and $m_2$.
    Suppose that the assignment $\bm{Z}$ satisfies:
    \begin{itemize}
        \item $Z_j^{(1)} = Z_{\mathbf{T}_1(j)}^{(0)}$ for $j \in [n_1] \setminus E_1$ and $Z_j^{(1)} = Z_{\mathbf{T}_{01}(j)}^{(0)}$ for $j \in E_1$,
        \item $Z_j^{(2)} = Z_{\mathbf{T}_2(j)}^{(0)}$ for $j \in [n_2] \setminus E_2$ and $Z_j^{(2)} = Z_{\mathbf{T}_{02}(j)}^{(0)}$ for $j \in E_2$.
    \end{itemize}
    Then, we have
    $$
    \Delta(\bm{Z}) \le \frac{1}{2} \left( \frac{m_1}{n_1} + \frac{m_2}{n_2} \right).
    $$
\end{proposition}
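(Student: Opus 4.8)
The plan is to exploit the additive structure of the multinary fairness measure and reduce the claim to the binary result \cref{prop:relaxed_general} applied groupwise. For $B=3$ the definition of $\Delta(\bm{Z})$ reads
$$
\Delta(\bm{Z}) = \frac{1}{4}\sum_{k=1}^{K}\sum_{b=1}^{2}\left|\frac{1}{n_0}\sum_{i=1}^{n_0}\mathbb{I}(Z_i^{(0)}=k) - \frac{1}{n_b}\sum_{j=1}^{n_b}\mathbb{I}(Z_j^{(b)}=k)\right|,
$$
which splits cleanly into two terms, one for group $1$ and one for group $2$. For $b\in\{1,2\}$ I would introduce the pairwise quantity $\Delta_b(\bm{Z}) := \tfrac{1}{2}\sum_{k=1}^{K}\bigl|\tfrac{1}{n_0}\sum_i \mathbb{I}(Z_i^{(0)}=k) - \tfrac{1}{n_b}\sum_j \mathbb{I}(Z_j^{(b)}=k)\bigr|$. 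Since the binary level in \cref{eq:def_delta} carries the factor $\tfrac12$ while the $B=3$ measure carries $\tfrac{1}{4}=\tfrac{1}{2(B-1)}$, the normalizations recombine so that $\Delta(\bm{Z}) = \tfrac{1}{2}\bigl(\Delta_1(\bm{Z}) + \Delta_2(\bm{Z})\bigr)$.

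The key observation is that each $\Delta_b(\bm{Z})$ is \emph{exactly} the binary group-fairness level of the assignment between group $0$ and group $b$, with group $b$ playing the role of $\mathcal{D}^{(1)}$ in the binary setting, matching map $\mathbf{T}_b$, arbitrary function $\mathbf{T}_{0b}$, and mask $E_b$ of size $m_b$. Hence \cref{prop:relaxed_general} applies verbatim to each pair and yields $\Delta_1(\bm{Z}) \le m_1/n_1$ and $\Delta_2(\bm{Z}) \le m_2/n_2$. Substituting these into the decomposition gives $\Delta(\bm{Z}) \le \tfrac{1}{2}\bigl(m_1/n_1 + m_2/n_2\bigr)$, which is the claimed bound.

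The only genuine subtlety I anticipate is the implicit divisibility requirement: \cref{prop:relaxed_general} is stated under $r=0$, i.e., each group size is an exact multiple of $n_0$, so that $\mathbf{T}_1,\mathbf{T}_2$ are genuine matching maps (onto with uniform preimage size) and the unmasked indices contribute a vanishing term in the triangle-inequality argument. I would therefore either state this divisibility as a standing assumption or, equivalently, repeat the triangle-inequality splitting of \cref{prop:relaxed_general} directly on each of the two summands; once this is in place, the remainder is purely bookkeeping of the shared constant $\tfrac{1}{2(B-1)}$. The extension to arbitrary $B$ is then immediate, since the measure decomposes into $B-1$ pairwise terms each bounded by $m_b/n_b$, giving $\Delta(\bm{Z}) \le \tfrac{1}{B-1}\sum_{b=1}^{B-1} m_b/n_b$, which specializes to the stated inequality when $B=3$.
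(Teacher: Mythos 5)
Your proof is correct and follows essentially the same route as the paper's: decompose the multinary measure $\Delta(\bm{Z})$ into the two pairwise binary fairness levels (group $0$ vs.\ group $b$), apply \cref{prop:relaxed_general} to each pair, and recombine using the $\tfrac{1}{2(B-1)}$ normalization. Your explicit flag of the implicit $r=0$ divisibility assumption inherited from \cref{prop:relaxed_general} is a point the paper's own proof glosses over, and stating it (or redoing the triangle-inequality split per pair, as you suggest) would make the argument slightly more careful than the original.
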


\begin{proof}[Proof of \cref{prop:relaxed_general_3group}]
    We investigate all three pairs among the three groups: $(0,1)$, $(0,2)$, and $(1,2)$.
    
    {(i) Between groups 0 and 1:}  
    Since $Z_j^{(1)} = Z_{\mathbf{T}_1(j)}^{(0)}$ for $j \in [n_1] \setminus E_1,$
    we have the following inequality utilizing the proof of \cref{prop:relaxed_general}, 
    $$
    \frac{1}{2} \sum_{k=1}^K \left| \frac{1}{n_0} \sum_{i=1}^{n_0} \mathbb{I}(Z_i^{(0)} = k) 
    - \frac{1}{n_1} \sum_{j=1}^{n_1} \mathbb{I}(Z_j^{(1)} = k) \right| \le \frac{m_1}{n_1}.
    $$
    
    {(ii) Between groups 0 and 2:}
    Similarly, we have
    $$
    \frac{1}{2} \sum_{k=1}^K \left| \frac{1}{n_0} \sum_{i=1}^{n_0} \mathbb{I}(Z_i^{(0)} = k) 
    - \frac{1}{n_2} \sum_{j=1}^{n_2} \mathbb{I}(Z_j^{(2)} = k) \right| \le \frac{m_2}{n_2}.
    $$
    
    {Combining the two terms:}
    Taking the average, we get:
    $$
    \Delta(\bm{Z}) = \frac{2}{2(3 - 1)} \sum_{b=1}^{2} 
    \left( \frac{1}{2} \sum_{k=1}^K \left| \frac{1}{n_{0}} \sum_{i=1}^{n_{0}} \mathbb{I}(Z_i^{(0)} = k) 
    - \frac{1}{n_{b}} \sum_{j=1}^{n_{b}} \mathbb{I}(Z_j^{(b)} = k) \right| \right)
    $$
    
    $$
    \le \frac{1}{2} \left( \frac{m_1}{n_1} + \frac{m_2}{n_2} \right).
    $$
\end{proof}

Note that we use this extended approach for \textsc{Bank} dataset with three sensitive groups in our experiments.
See \cref{sec:exps-appen_results} for the results showing that FBC works well for a multinary sensitive attribute. 

\clearpage
\section{Experiments}\label{sec:exps-appen}

\subsection{Datasets}\label{sec:exps-appen_data}

\begin{enumerate}
    \item 
    Toy dataset:
    We build a 2D toy dataset from a 6-component Gaussian mixture model with unit covariance matrix $\mathbb{I}_{2}.$
    For $\mathcal{D}^{(0)},$ we draw 600 samples from each $\mathcal{N}([-5, -30], \mathbb{I}_{2})/3 +\mathcal{N}([-5, 0], \mathbb{I}_{2})/3+ \mathcal{N}([-5, 30], \mathbb{I}_{2})/3$ and
    Similarly for $\mathcal{D}^{(1)},$ we draw 600 samples from $\mathcal{N}([-5, -29.5], \mathbb{I}_{2})/3+ \mathcal{N}([-5, 0.5], \mathbb{I}_{2})/3+ \mathcal{N}([-5, 30.5], \mathbb{I}_{2})/3.$
    As a result, the total number of samples is $1200,$ with $n_{0} = n_{1} = 600.$

    \item 
    \textsc{Diabetes}: 
    The diabetes dataset is a collection of data spanning five years, consisting of various physical indicators (e.g., glucose concentration, blood pressure, BMI, etc., totaling 7 features) of Pima Indian women\footnote{Downloaded from \url{https://github.com/aasu14/Diabetes-Data-Set-UCI}}.
    The sample size is 768 (sensitive group sizes: 396 and 372).
    It originates from the National Institute of Diabetes and Digestive and Kidney Diseases \citep{pima_diabetes}.
    For the sensitive attribute, we use the binarized age attribute at the median value.

    \item 
    \textsc{Adult}:
    The adult income dataset is a collection of data consisting of several demographic features including employment features.
    It is extracted from 1994 U.S. Census database \citep{misc_adult_2}.
    We subsample 1,000 data points (sensitive group sizes: 694 and 306) from the original dataset.
    We use 5 continuous features ({age}, {fnlwgt}, {education-num}, {capital-gain}, {hours-per-week}).
    For the sensitive attribute, we use the gender (male/female) attribute.

    \item 
    \textsc{Bank}:
    The bank marketing dataset is a collection of data from a Portuguese bank’s direct marketing campaigns, each corresponding to an individual client contacted \citep{MORO201422}.
    We use 6 continuous features (age, call duration, 3-month Euribor rate, number of employees, consumer price index, and number of contacts during the campaign).
    For two sensitive groups, we treat marital status as the sensitive attribute: categorized into two groups (single/married) and exclude all `unknown' entries.
    We subsample 1,000 data points (sensitive group sizes: 606 and 394) from the original dataset.
    
    To consider three sensitive groups, following \citep{ziko2021variational}, we categorize the marital status into three groups (single/married/divorced) and exclude all `unknown' entries.
    We subsample 1,000 data points (sensitive group sizes: 586, 305, and 109) from the original dataset.
\end{enumerate}

\subsection{Implementation details}\label{sec:exps-appen_impl}

\paragraph{Algorithms}

\begin{itemize}
    \item Baseline methods:
    For MFM, we employ the Julia code of \cite{miller2018mixture} without modification, available on the authors' GitHub\footnote{\url{https://github.com/jwmi/BayesianMixtures.jl}}.
    Similarly, for SFC and VFC, we use the publicly released source codes provided by the authors\footnote{SFC: \url{https://github.com/talwagner/fair_clustering}} \footnote{VFC: \url{https://github.com/imtiazziko/Variational-Fair-Clustering}}.
    Particularly for VFC, due to overflow, we follow the authors’ implementation of using $L_{2}$-normalized data when running the algorithm and subsequently transform them to the original data by multiplying by the norm when calculating \texttt{Cost}, $\texttt{Bal},$ and $\Delta.$
    Fair MFM is a combination of SFC and MFM, where we find fairlets using SFC first and apply the MFM algorithm of \cite{miller2018mixture} to the space of the fairlets.
    
    \item FBC:
    We use a conjugate prior for $H.$
    For Gaussian mixture, we use the following prior specification.
    For $\bm{\theta},$ we set $ a = b =1$ in $\mathcal{N}(\mu_j,\lambda_j^{-1})$, $\lambda_j \sim \textup{Gamma}(a,b)$, $\mu_j \vert \lambda_j \sim \mathcal{N}(0, \lambda_j^{-1}).$
    For $K,$ we set $\kappa = 0.1$ in $K \sim \textup{Geometric}(\kappa)$ and set $\gamma=1$ in the Dirichlet distribution.
    The same priors are used for MFM and Fair MFM.
    The Julia language is used for running FBC. 

    For the choice of $D$ in the energy, we use the Euclidean distance for continuous features and the Hamming distance for categorical data \cite{hamming1950error,huang1998extensions,zhang2006clustering}.     
\end{itemize}

\paragraph{Hardwares}

\begin{itemize}
    \item All our experiments are done through Julia 1.11.2, Python 3.9.16 with Intel(R) Xeon(R) Silver 4310 CPU @ 2.10GHz and 128GB RAM. 
\end{itemize}

\clearpage
\subsection{Omitted experimental results}\label{sec:exps-appen_results}

\subsubsection{Fair clustering performance}\label{sec:exp-compare_appen}

\paragraph{Fairness level control of FBC}

Figure \ref{fig:control} shows the relationship between $m$ and the fairness level, showing that fairness level is well controlled by controlling $m$ unless $m$ is too large.

\begin{figure*}[h]
    \vskip -0.1in
    \centering
    \subfloat{\includegraphics[width=0.3\linewidth]{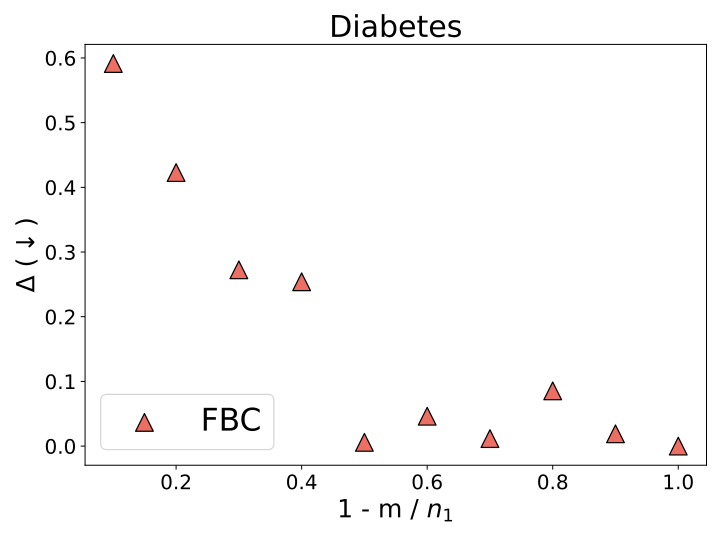}}
    \subfloat{\includegraphics[width=0.3\linewidth]{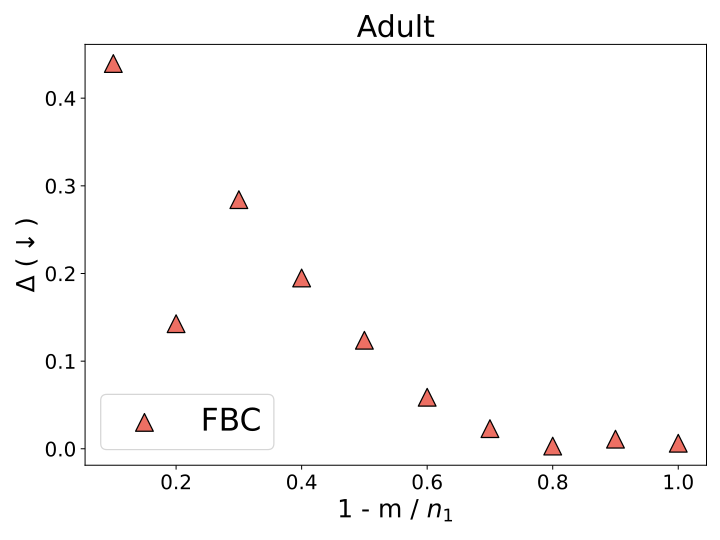}}
    \subfloat{\includegraphics[width=0.3\linewidth]{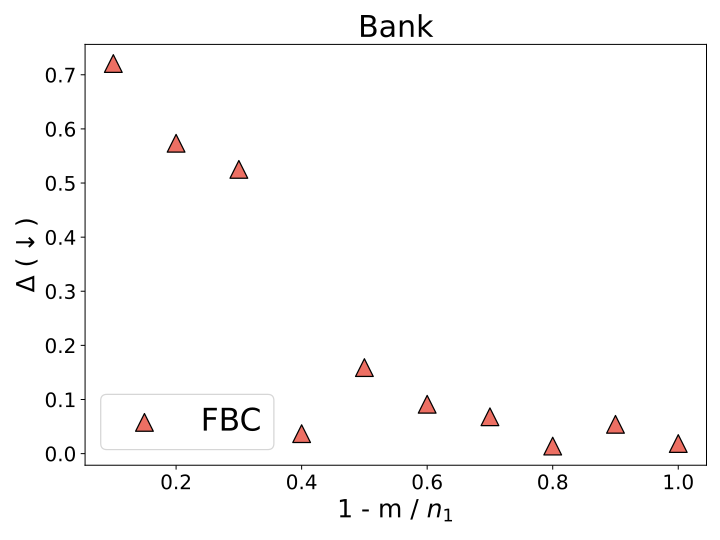}}
    \\
    \subfloat{\includegraphics[width=0.3\linewidth]{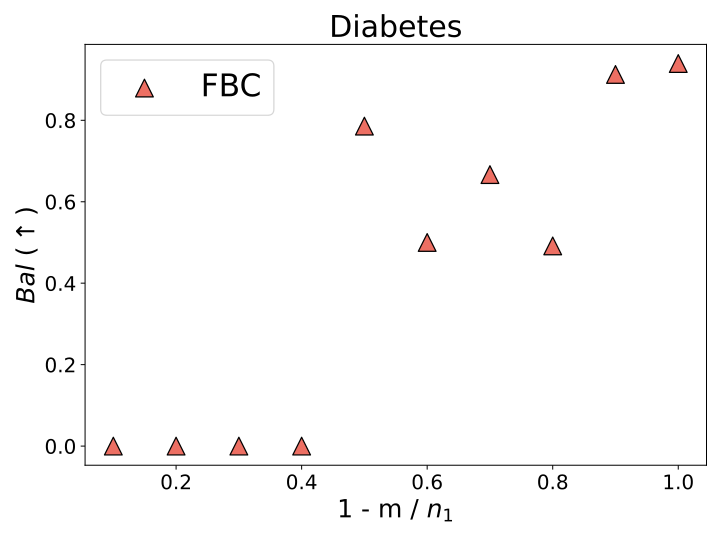}}
    \subfloat{\includegraphics[width=0.3\linewidth]{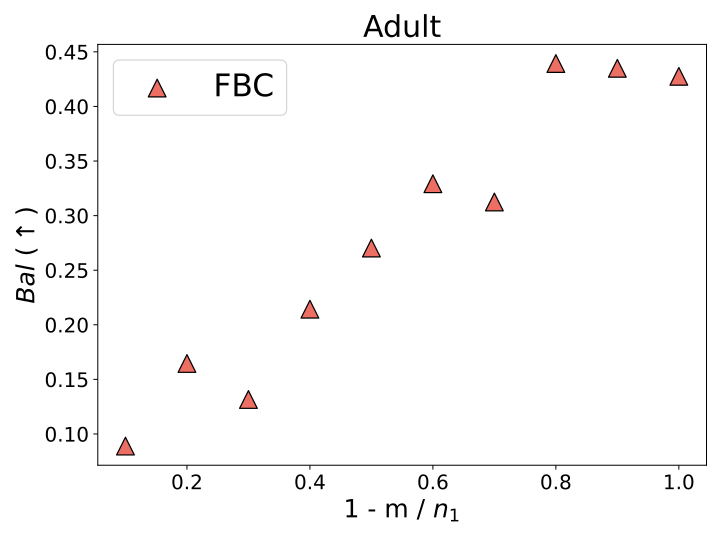}}
    \subfloat{\includegraphics[width=0.3\linewidth]{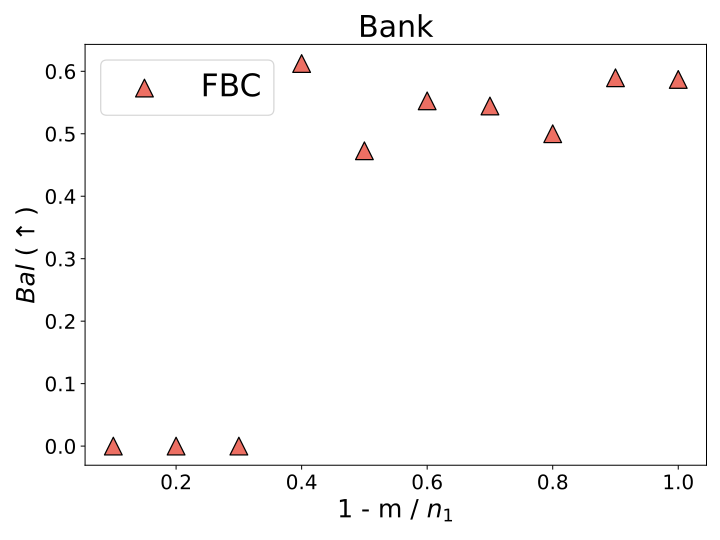}}
    \caption{
    (Top three) Trade-off between $m$ (the size of $E$) and the fairness level $\Delta.$
    Smaller $\Delta,$ fairer the clustering.
    (Bottom three) Trade-off between $m$ (the size of $E$) and the \texttt{Bal}.
    Larger \texttt{Bal}, fairer the clustering.
    }
    \vskip -0.1in
    \label{fig:control}
\end{figure*}

\paragraph{Analysis with a multinary sensitive attribute \texorpdfstring{(\textsc{Bank})}{(Bank)}}

We analyze \textsc{Bank} with three sensitive groups.
\cref{table:bank3-compare} presents the performance comparison between VFC and FBC, showing that FBC is competitive to VFC in terms of the utility–fairness trade-off.  
In particular, FBC achieves significantly better fairness levels (i.e., lower $\Delta$ and higher \texttt{Bal}), while its utility (\texttt{Cost}) remains comparable.
Moreover, \cref{fig:control-bank3} shows the relationship between $m$ and the fairness level, showing that a smaller $m$ results in a fairer clustering.

\begin{table}[h]
    \vskip -0.1in
    \footnotesize
    \caption{
    Comparison of \texttt{Cost}, $\Delta,$ and $\texttt{Bal}$ for VFC and FBC on \textsc{Bank} dataset with three sensitive groups.
    }
    \label{table:bank3-compare}
    \centering
    \vskip 0.1in
    \begin{tabular}{c||c|c|c|c}
        \toprule
        Dataset & \multicolumn{4}{c}{\textsc{Bank}}
        \\
        \midrule
        Method & $K$ & \texttt{Cost} ($\downarrow$) & $\Delta$ ($\downarrow$)& $\texttt{Bal}$ ($\uparrow$)
        \\
        \midrule
        VFC & 3 & 5.804 & 0.064 & 0.166
        \\
        FBC $\checkmark$ & 3 & 5.984 & 0.007 & 0.181 
        \\
        \bottomrule
    \end{tabular}
    \vskip -0.1in
\end{table}

\begin{figure*}[h]
    \vskip -0.1in
    \centering
    \subfloat{\includegraphics[width=0.3\linewidth]{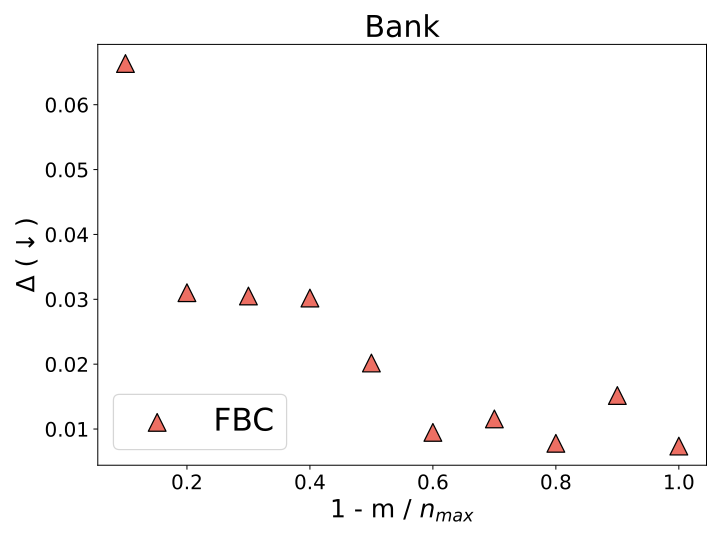}}
    \subfloat{\includegraphics[width=0.3\linewidth]{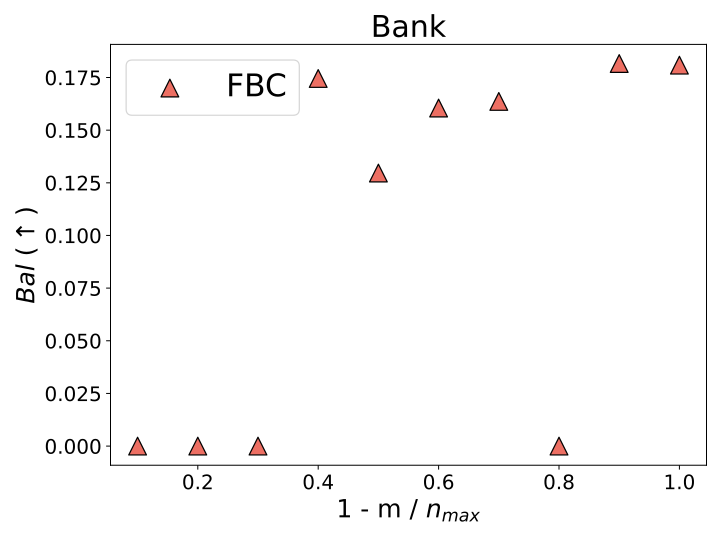}}
    \caption{
    Trade-off between the size of $E$ and the fairness level $\Delta.$
    Smaller $\Delta,$ fairer the clustering.
    The $x$-axis is $1 - m / n_{\textup{max}}$
    where $n_{\textup{max}} = \max (n_{0}, n_{1}, n_{2}) = \max (109, 305, 586) = 586.$
    }
    \vskip -0.1in
    \label{fig:control-bank3}
\end{figure*}

\clearpage
\subsubsection{Inference of the number of clusters \texorpdfstring{$K$}{K}}\label{sec:appen-exp-K_nll}

To see whether the inferred $K$ by FBC is reasonable, we calculate the negative log-likelihood (\texttt{NLL}) of the fair mixture model learned by FBC on test data as follows.

\begin{enumerate}
\item We prepare test dataset $\mathcal{D}_0^{\text{test}}$ and $\mathcal{D}_1^{\text{test}}$, each containing $n_{\text{test}}$ samples.
That is, $|\mathcal{D}_0^{\text{test}}| = |\mathcal{D}_1^{\text{test}}| = n_{\text{test}}$.

\item To construct fair assignments $\bm{Z}^{\textup{test}} := (Z_{1}^{(0), \text{test}}, \ldots, Z_{n_{\text{test}}}^{(0), \text{test}}, Z_{1}^{(1), \text{test}}, \ldots, Z_{n_{\text{test}}}^{(1), \text{test}})$ on $\mathcal{D}_0^{\text{test}} \cup \mathcal{D}_1^{\text{test}}$, we build a one-to-one matching map between $\mathcal{D}_0^{\text{test}}$ and $\mathcal{D}_1^{\text{test}}$, using the optimal transport map $\mathbf{T}_{\text{test}}$.

\item We match instances in $\mathcal{D}_0^{\text{test}}$ to those in  $\mathcal{D}_0$ by using an optimal transport $\mathbf{T}_{*}.$

\item We assign $Z_j^{(0), \text{test}}=Z_{\mathbf{T}_*(j)}^{(0)}$ and $Z_j^{(1), \text{test}} = Z_{\mathbf{T}_{\text{test}}(j)}^{(0), \text{test}}$ for $j \in [n_{\text{test}}].$

\item For each posterior sample obtained by FBC, we calculate the log-likelihood of test data conditional on $\bm{Z}^{\textup{test}}.$

\end{enumerate}

See \cref{fig:K_nll_visual} for a simple visualization of this construction when $n_{\textup{test}} = 4.$

\begin{figure}[h]
    \centering
    \includegraphics[width=0.7\linewidth]{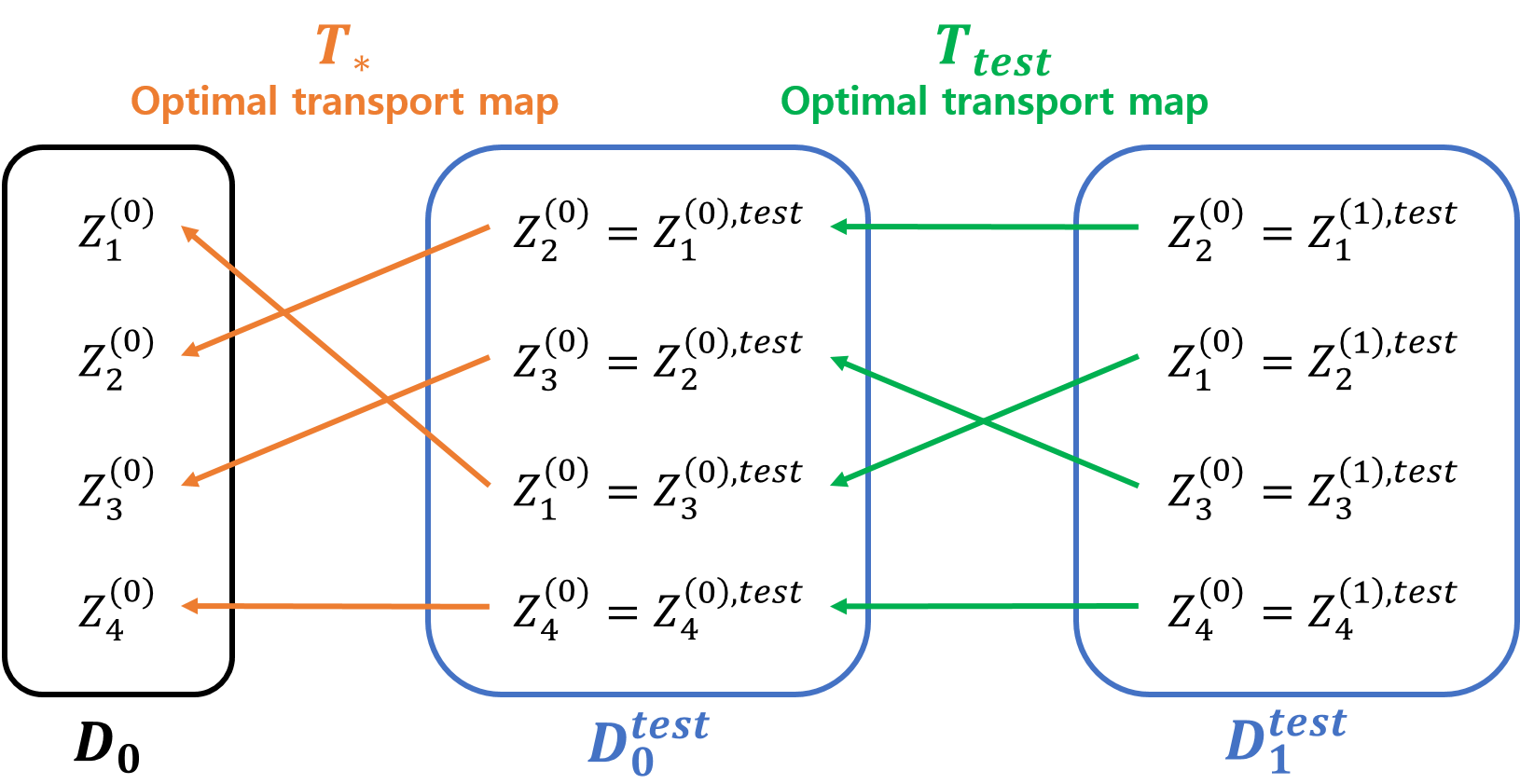}
    \caption{
    A visualization of the construction of fair assignments $\bm{Z}^{\textup{test}}$ for the test data $\mathcal{D}^{(0), \textup{test}}$ and $\mathcal{D}^{(1), \textup{test}}$.
    The colored lines indicate the optimal transport maps.
    }
    \label{fig:K_nll_visual}
\end{figure}

We then draw a box-plot of $K$ versus the negative log-likelihood (\texttt{NLL}) over $K.$
It is observed that the test log-likelihood is smaller at $K=3$ (i.e., the posterior mode), which indicates that FBC is also good at inference of $K.$

\begin{figure}[h]
    \centering
    \includegraphics[width=0.5\linewidth]{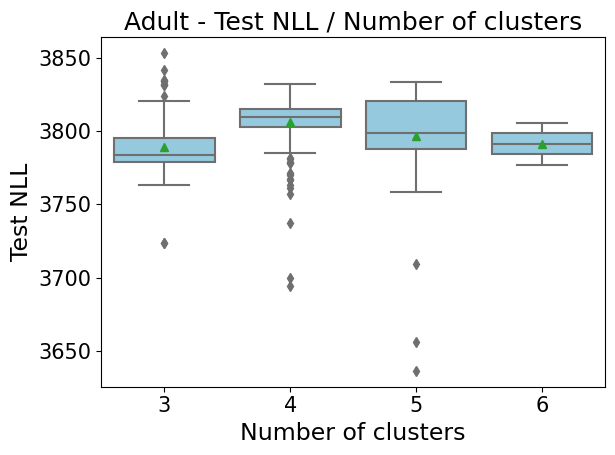}
    \caption{Number of clusters $K$ vs. Negative log-likelihood (\texttt{NLL}) on test data.}
    \label{fig:K_nll}
\end{figure}

\clearpage
\subsubsection{Application of FBC to categorical data}\label{sec:appen-exp-categorical}

\paragraph{Dataset construction}
We categorize the 6 continuous features (age, call duration, 3-month Euribor rate, number of employees, consumer price index, and number of contacts during the campaign) of \textsc{Bank} dataset by their median values.
For the sensitive attribute, we use the marital status (single/married) attribute, similar to the analysis of the original \textsc{Bank} dataset.

\paragraph{FBC for categorical data}
We consider the mixture of independent Bernoulli distributions, where each component $f(\cdot|\theta_k)$ is the product of Bernoulli distribution.
In other words, $f(\cdot|\theta_k) \sim \prod_{j=1}^{6} \textup{Bernouilli} (\cdot \vert p_{kj}),$
where $\theta_k=(p_{k1},\ldots,p_{k6})\in[0,1]^6$ and $p_{kj}\in[0,1]$ for $\forall j=1,\ldots,6$. 
We use the conjugate prior as $p_{kj}, j \in \{1, \ldots, 6\} \overset{\textup{i.i.d.}}{\sim}\textup{Beta}(\alpha,\alpha)$ with $\alpha=1$.

\paragraph{\texorpdfstring{\texttt{Cost}}{Cost} for categorical data}
For categorical data, we use the sum of negative log-likelihood values in each cluster as \texttt{Cost}.

\paragraph{Results}
The comparison results are provided in \cref{table:mixed}, which show that the fairness levels are significantly improved ($\Delta: 0.058 \to 0.004$ and $\texttt{Bal}: 0.200 \to 0.640$).
That is, FBC controls the fairness level well for categorical data. 
In addition, it is observed that the fairness constraint affects not only the fairness level but also the number of clusters. A similar behavior has been already observed in the analysis of the toy data in \cref{sec:toy}.


\begin{table}[h]
    \vskip -0.1in
    \footnotesize
    \caption{
    Comparison of inferred $K,$ \texttt{Cost}, $\Delta,$ and $\texttt{Bal}$ for MFM and FBC on the categorized \textsc{Bank} dataset.
    }
    \label{table:mixed}
    \centering
    \vskip 0.1in
    \begin{tabular}{c||c|c|c|c}
        \toprule
        Dataset & \multicolumn{4}{c}{Categorized \textsc{Bank}}
        \\
        \midrule
        Method & $K$ & \texttt{Cost} ($\downarrow$) & $\Delta$ ($\downarrow$)& $\texttt{Bal}$ ($\uparrow$)
        \\
        \midrule
        MFM & 5 & 2184.22 & 0.058 & 0.200
        \\
        FBC $\checkmark$ & 3 & 2907.31 & 0.004 & 0.640
        \\
        \bottomrule
    \end{tabular}
\end{table}


\clearpage
\subsubsection{Ablation studies}\label{sec:appen-exps-abl}

\paragraph{Temperature \texorpdfstring{$\tau$}{tau}}

\cref{table:appen-tau} reports the performance of FBC for different temperature values $\tau \in \{0.1,1.0,10.0 \}.$
Overall, varying $\tau$ does not affect much to the performance of FBC.

\begin{table}[h]
    \footnotesize
    \caption{
    Comparison of $K,$ \texttt{Cost}, $\Delta,$ and $\texttt{Bal}$ for $\tau \in \{ 0.1, 1.0, 10.0 \}.$
    }
    \label{table:appen-tau}
    \centering
    \vskip 0.1in
    \begin{tabular}{c||P{0.2cm}P{0.8cm}P{0.7cm}P{0.65cm}|P{0.2cm}P{0.8cm}P{0.7cm}P{0.65cm}|P{0.2cm}P{0.85cm}P{0.7cm}P{0.65cm}}
        \toprule
        \multirow{2}{*}{$\tau$} & \multicolumn{4}{c|}{\textsc{Diabetes}} & \multicolumn{4}{c|}{\textsc{Adult}} & \multicolumn{4}{c}{\textsc{Bank}}
        \\
        & $K$ & \texttt{Cost} ($\downarrow$) & $\Delta$ ($\downarrow$) & $\texttt{Bal}$ ($\uparrow$) & $K$ & \texttt{Cost} ($\downarrow$) & $\Delta$ ($\downarrow$) & $\texttt{Bal}$ ($\uparrow$) & $K$ & \texttt{Cost} ($\downarrow$) & $\Delta$ ($\downarrow$) & $\texttt{Bal}$ ($\uparrow$)
        \\
        \midrule
        0.1 & 4 & 6.776 & {0.007} & {0.917} & 3 & 4.991  & 0.003 & 0.433 & 7 & 5.947 & {0.020} & {0.500}
        \\
        1.0 & 4 & 6.795 & {0.012} & {0.910} & 3 & 4.989  & 0.006 & 0.429 & 7 & 5.947 & {0.020} & {0.500}
        \\
        10.0 & 4 & 6.795 & {0.012} & {0.910} & 3 & 4.989  & 0.006 & 0.429 & 7 & 5.947 & {0.020} & {0.500}
        \\
        \bottomrule
    \end{tabular}
\end{table}

\paragraph{Choice of \texorpdfstring{$R$}{R} when \texorpdfstring{$r > 0$}{r>0}}

In this section, we compare the two choices of $R$ in FBC:
(i) a random subset of $[n_0]$ and 
(ii) the index of samples closest to the cluster centers obtained by a certain clustering algorithm to $\mathcal{D}^{(0)}$ with $K=r,$
which is considered in \cref{sec:unequal_size}.
\cref{table:appen-heuristic} below provides the results, showing that the performance of the two approaches are not much different. 
Here, we utilized $K$-medoids algorithm to yield the cluster centers from $[n_0]$. 
Overall, we can conclude that FBC is not sensitive to the choice of $R$ and the two proposed heuristic approaches work well in practice.

\begin{table}[h]
    \footnotesize
    \caption{
    Comparison of $K,$ \texttt{Cost}, $\Delta,$ and $\texttt{Bal}$ for the two heuristic approaches in \cref{sec:unequal_size}.
    `Random' indicates the first approach ($R = \textup{ a random subset of } [n_0]$).
    and
    `Clustering' indicates the second approach ($R = \textup{ centers obtained by a clustering algorithm}$).
    }
    \label{table:appen-heuristic}
    \centering
    \vskip 0.1in
    \begin{tabular}{c||P{0.2cm}P{0.8cm}P{0.7cm}P{0.65cm}|P{0.2cm}P{0.8cm}P{0.7cm}P{0.65cm}|P{0.2cm}P{0.85cm}P{0.7cm}P{0.65cm}}
        \toprule
        \multirow{2}{*}{$R$} & \multicolumn{4}{c|}{\textsc{Diabetes}} & \multicolumn{4}{c|}{\textsc{Adult}} & \multicolumn{4}{c}{\textsc{Bank}}
        \\
        & $K$ & \texttt{Cost} ($\downarrow$) & $\Delta$ ($\downarrow$) & $\texttt{Bal}$ ($\uparrow$) & $K$ & \texttt{Cost} ($\downarrow$) & $\Delta$ ($\downarrow$) & $\texttt{Bal}$ ($\uparrow$) & $K$ & \texttt{Cost} ($\downarrow$) & $\Delta$ ($\downarrow$) & $\texttt{Bal}$ ($\uparrow$)
        \\
        \midrule
        Random & 4 & 6.795 & {0.012} & {0.910} & 3 & 4.989  & 0.006 & 0.429 & 7 & 5.947 & {0.020} & {0.500}
        \\
        Clustering & 4 & 6.719 & 0.007 & 0.892 & 3 &  4.986 & 0.012 & 0.375 & 5 & 5.988  & 0.040  & 0.549
        \\
        \bottomrule
    \end{tabular}
\end{table}

\clearpage
\paragraph{Prior parameter \texorpdfstring{$\kappa$ in $p_{K}$}{kappa in p_K}}

We compare the posteriors of $K$ for various values of $\kappa$ in $\textup{Geometric}(\kappa).$
\cref{fig:kappa_abl} below shows the results, suggesting that FBC is not sensitive to the choice of $\kappa.$
For example, the posterior mode of $K$ is consistently $3$ for \textsc{Adult} dataset.

\begin{figure}[h]
    \centering
    \includegraphics[width=0.28\linewidth]{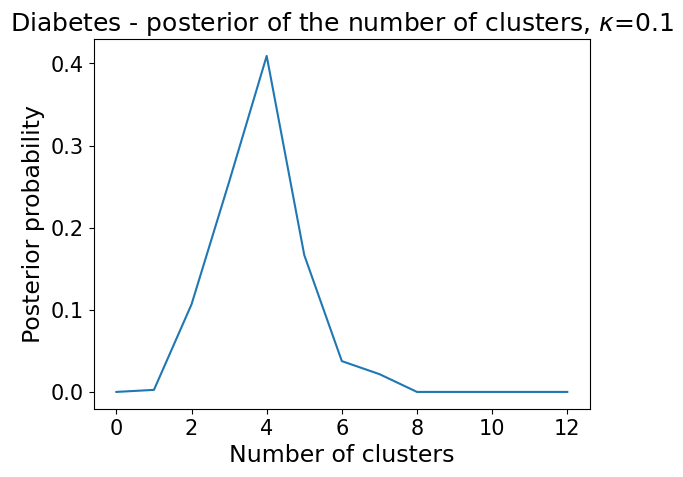}
    \includegraphics[width=0.27\linewidth]{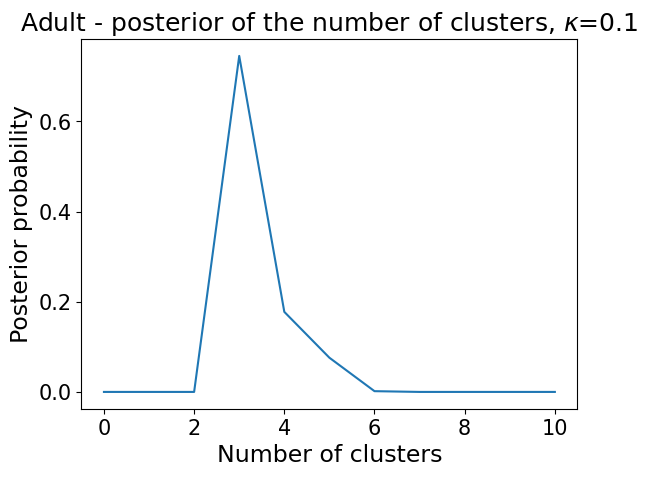}
    \includegraphics[width=0.27\linewidth]{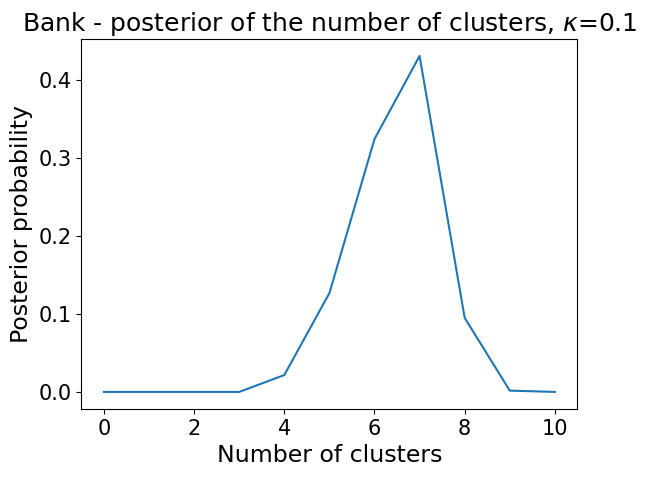}
    \\
    \includegraphics[width=0.28\linewidth]{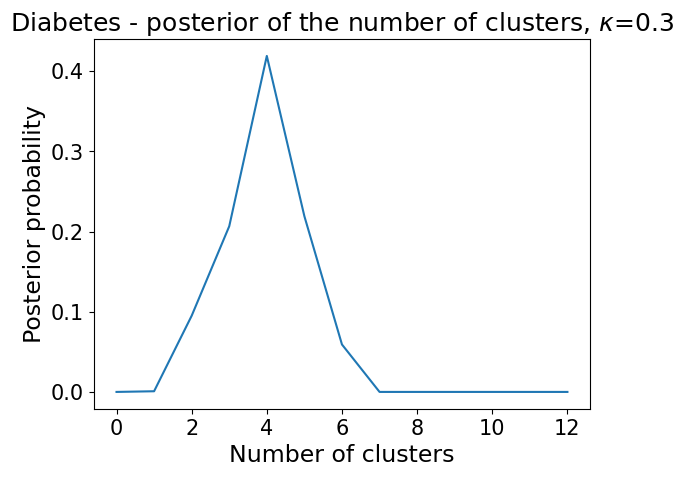}
    \includegraphics[width=0.27\linewidth]{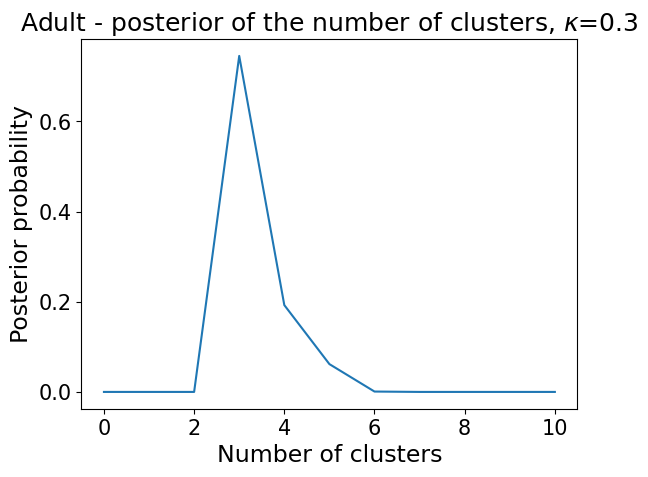}
    \includegraphics[width=0.27\linewidth]{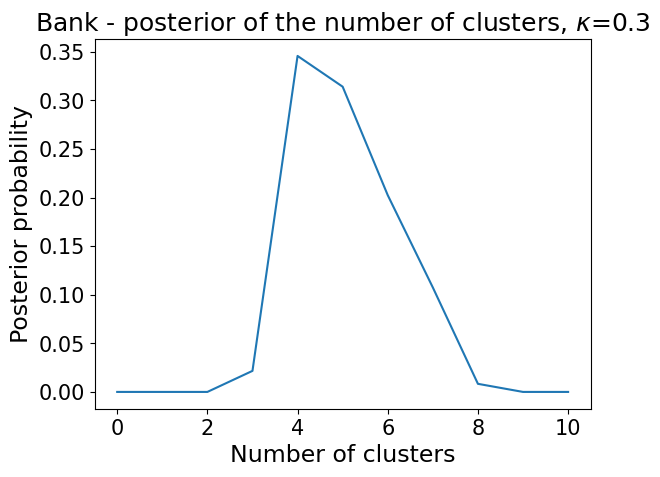}
    \\
    \includegraphics[width=0.28\linewidth]{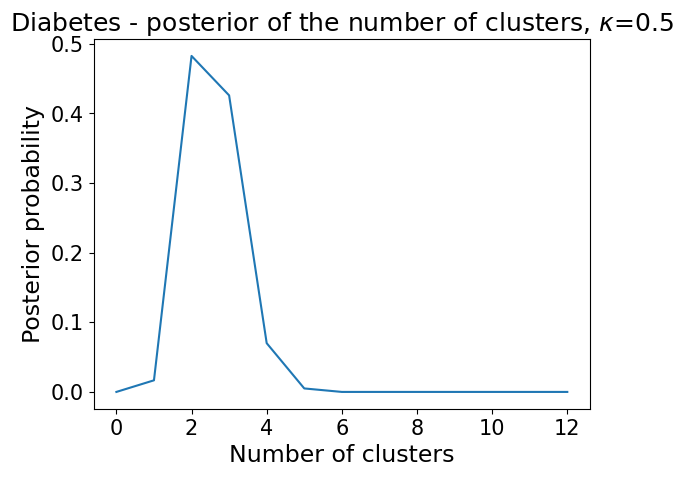}
    \includegraphics[width=0.27\linewidth]{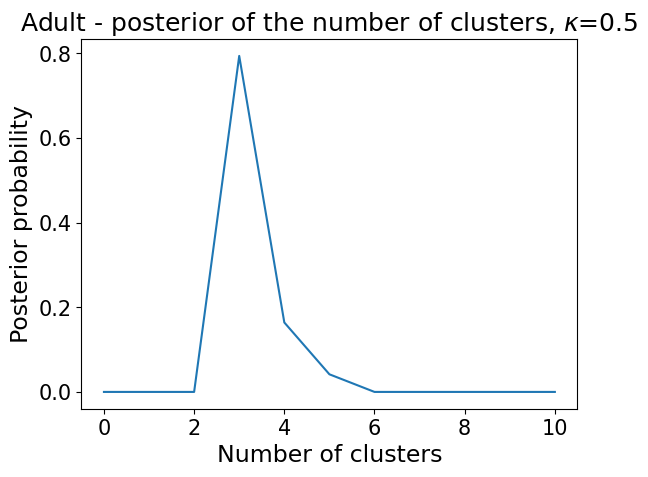}
    \includegraphics[width=0.27\linewidth]{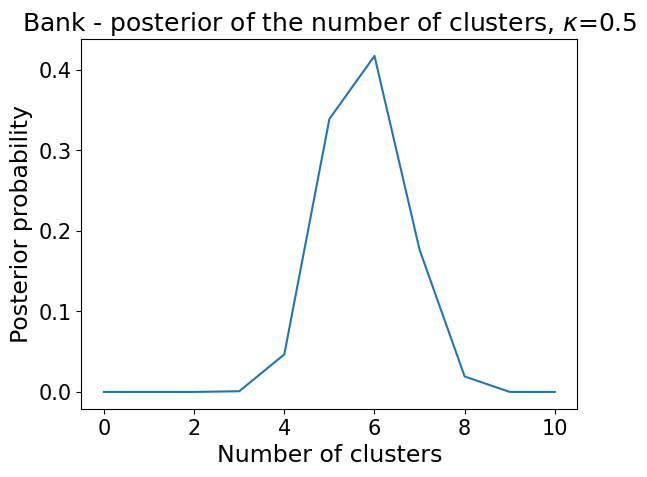}
    \\
    \includegraphics[width=0.28\linewidth]{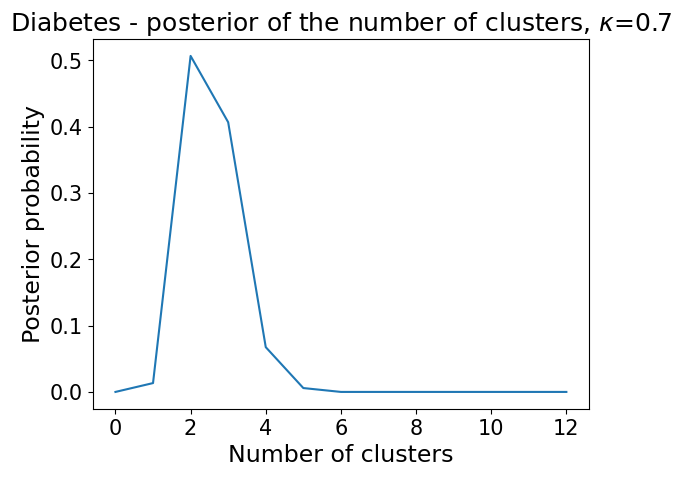}
    \includegraphics[width=0.27\linewidth]{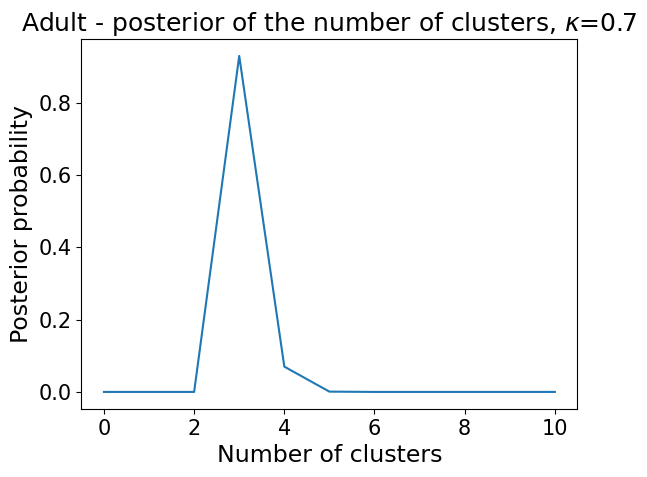}
    \includegraphics[width=0.27\linewidth]{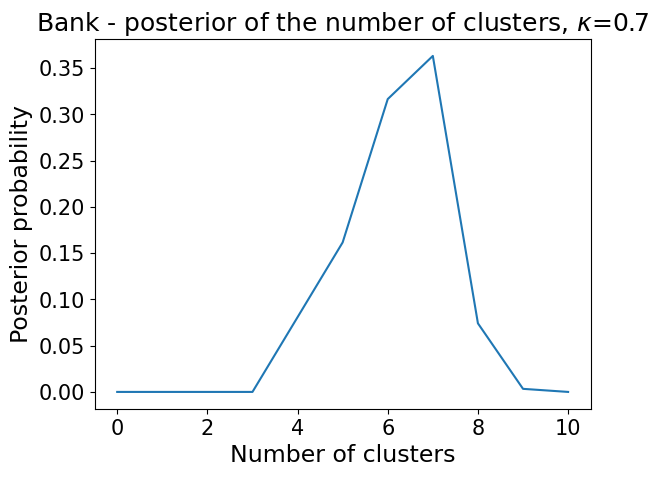}
    \\
    \includegraphics[width=0.28\linewidth]{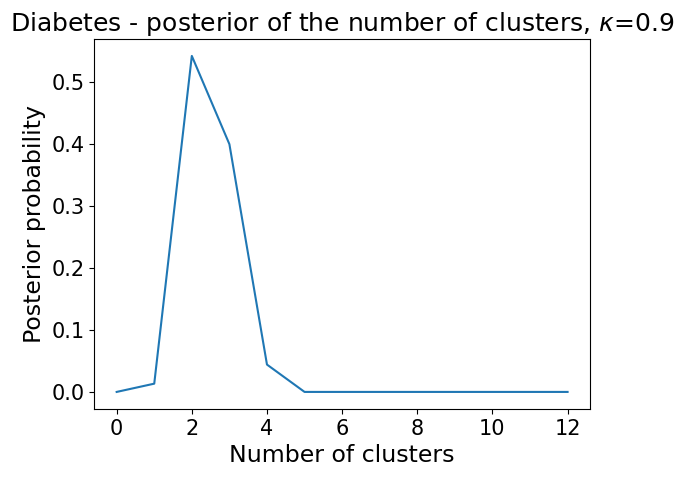}
    \includegraphics[width=0.27\linewidth]{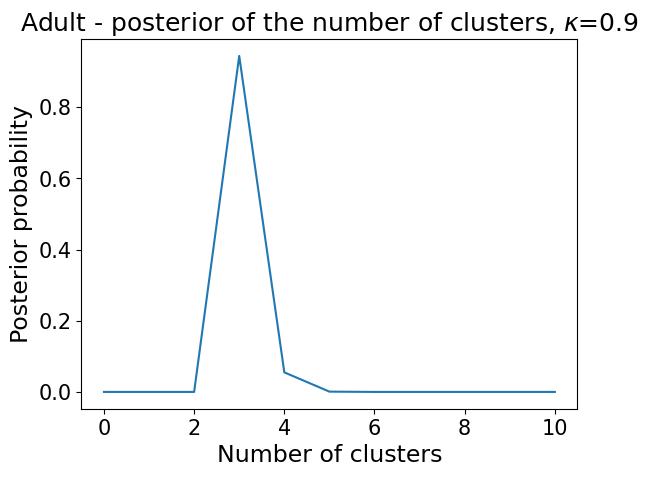}
    \includegraphics[width=0.27\linewidth]{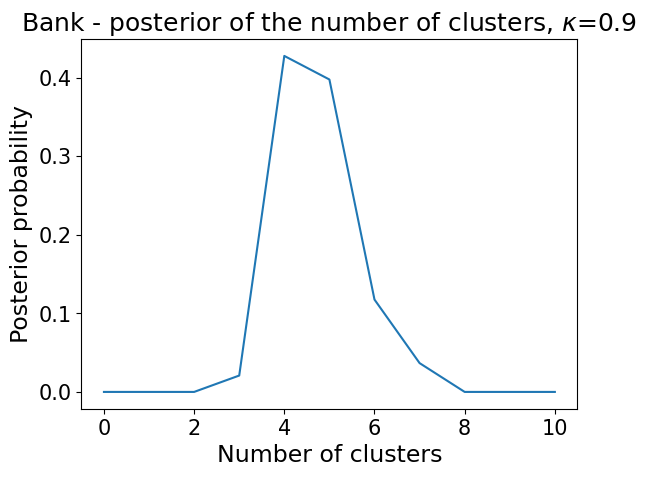}
    \caption{Posteriors of $K$ on (left) \textsc{Diabetes}, (center) \textsc{Adult}, and (right) \textsc{Bank} datasets, for $\kappa \in \{ 0.1, 0.3, 0.5, 0.7, 0.9 \}$ (from top to bottom).}
    \label{fig:kappa_abl}
\end{figure}


\clearpage
\paragraph{Convergence of MCMC}

We show that the proposed MCMC algorithm converges well in practice.
To do so, we analyze the autocorrelation function of the inferred $K$ and the negative log-likelihood (\texttt{NLL}) on training data (i.e., the observed instances).

\begin{enumerate} 
    \item[$K$:] As is done by \cite{miller2018mixture}, in Figure \ref{fig:conv_abl},
    we draw the autocorrelation function $\rho(h)$ defined as
    $$\rho(h)={\rm corr} \{ (K_t, K_{t+h}), t=1,\ldots \},$$
    where $K_t$ is a posterior sample at iteration $t.$
    The autocorrelation functions amply support that the proposed MCMC algorithm converges well.

    \begin{figure}[h]
        \centering
        \includegraphics[width=0.32\linewidth]{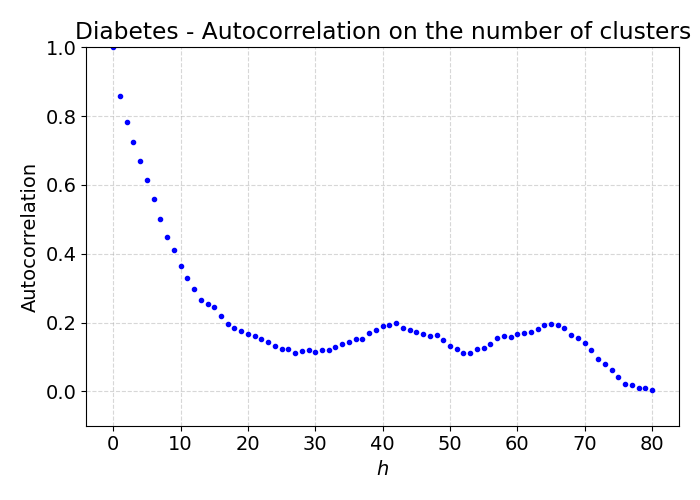}
        \includegraphics[width=0.32\linewidth]{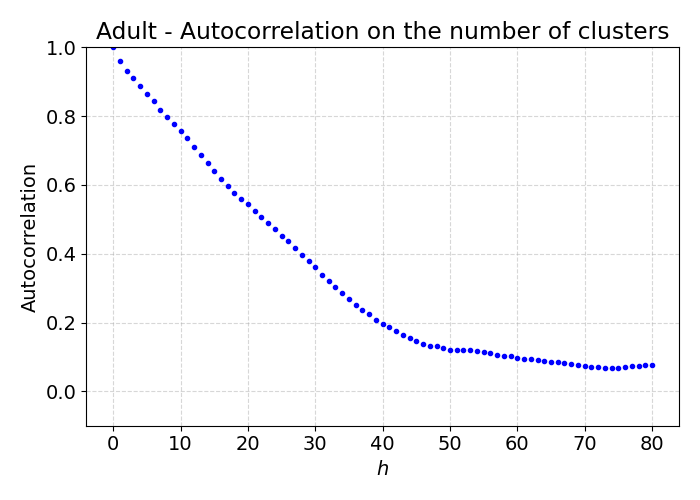}
        \includegraphics[width=0.32\linewidth]{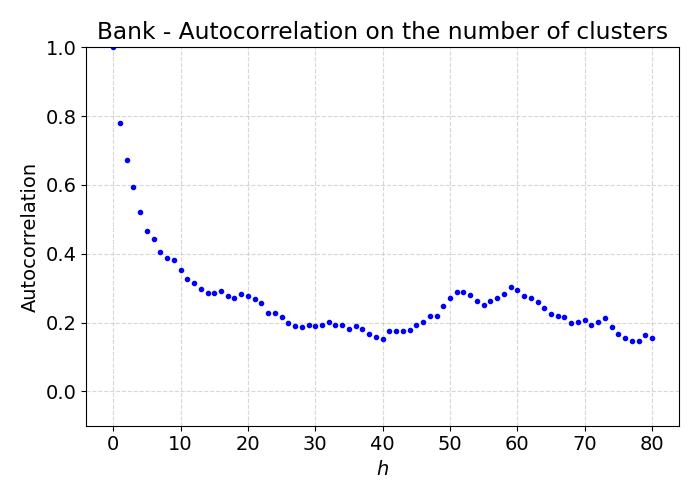}
        \caption{$h$ vs. Autocorrelation functions for (left) \textsc{Diabetes}, (center) \textsc{Adult}, and (right) \textsc{Bank} datasets.}
        \label{fig:conv_abl}
    \end{figure}

    \item[\texttt{NLL}:]
     Figure \ref{fig:conv_abl_nll} draws the trace plots of the \texttt{NLL} on training data.
     Dramatic decreases of \texttt{NLL} are observed which would happen when the MCMC algorithm moves one local optimum to another local optimum.
     Fortunately, the new optima are always better than the old ones, supporting the ability of FBC to explore efficiently for searching good clusters.

    \begin{figure}[h]
        \centering
        \includegraphics[width=0.32\linewidth]{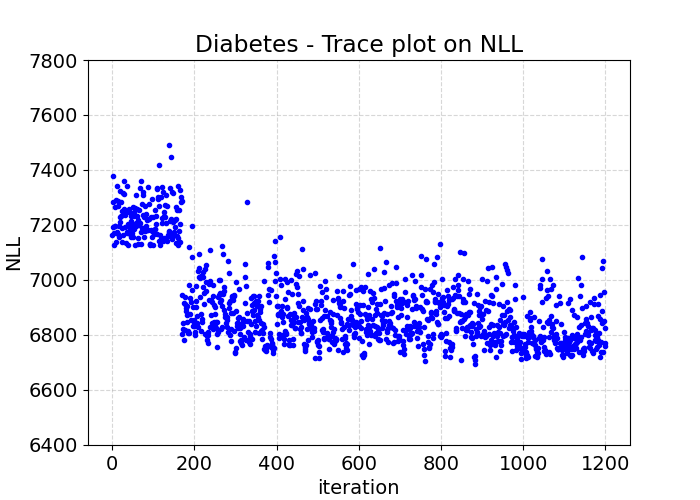}
        \includegraphics[width=0.32\linewidth]{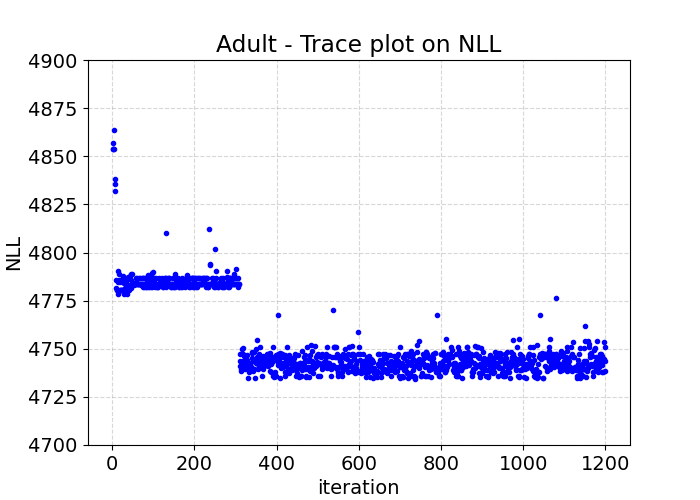}
        \includegraphics[width=0.32\linewidth]{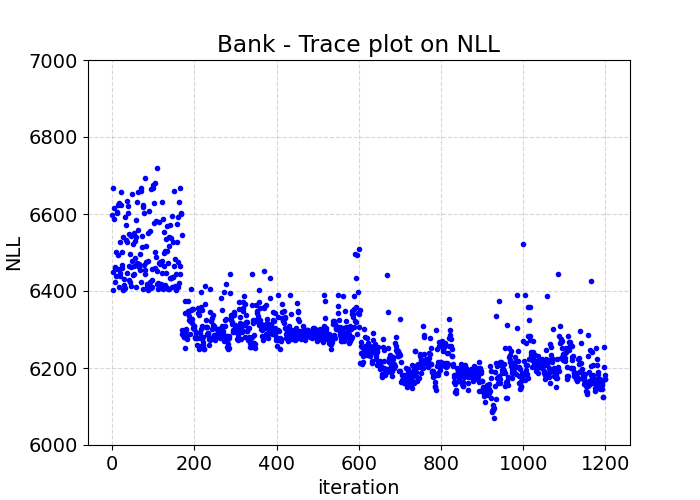}
        \caption{Trace plots of $\texttt{NLL}$ on (left) \textsc{Diabetes}, (center) \textsc{Adult}, and (right) \textsc{Bank} datasets.}
        \label{fig:conv_abl_nll}
    \end{figure}
\end{enumerate}



\end{document}